\newcommand{\conferencetemplate}{icml2022}
\newcommand{\isdraft}{false}
\newcommand{\revertchanges}{false}
\newcommand{\editmode}{false}
\newtheorem{corollary}{Corollary}
\newtheorem{lemma}{Lemma}
\newtheorem{theorem}{Theorem}
\newtheorem{assumption}{Assumption}
\newtheorem{definition}{Definition}
\newtheorem{proposition}{Proposition}
\newtheorem{remark}{Remark}
\DeclareMathOperator{\cbrprocess}{\overleftarrow{G}}
\newcommand{\xxX}{\mathcal{X}}               
\newcommand{\xxA}{\mathcal{A}}               
\newcommand{\xxf}[1][\pi]{\mu_{#1}}            
\newcommand{\xxs}[1][\pi]{\pmb{\sigma}_{#1}} 
\newcommand{\xxF}{\mathcal{F}}               
\newcommand{\xxL}{\mathscr{L}}               
\newcommand{\xxG}[2][\pi]{G^{#1}(#2)}              
\newcommand{\xxW}{W}                         
\newcommand{\xxStatsSet}[1]{\mathcal{S}_{#1}} 
\newcommand{\xxLoss}{\mathscr{F}}            
\newcommand{\xxEntropy}{\mathcal{H}}         
\newcommand{\xxTruncRet}{\overline{G}}       
\newcommand{\xxJointFD}{J}                   
\newcommand{\xxJointFDCB}{\overleftarrow{\xxJointFD}} 
\newcommand{\xxMMSIterate}{\widetilde{\returnmeasure}} 
\newcommand{\xxPSpaceSample}{\Omega}         
\newcommand{\xxPSpaceF}{\xxF}                
\newcommand{\xxPSpaceFiltration}{\indexedabove{t}{\xxF}} 
\newcommand{\xxPSpaceMeasure}{\mathsf{P}}    
\newcommand{\xxPSpace}{(\xxPSpaceSample, \xxPSpaceF,
  \xxPSpaceFiltration, \xxPSpaceMeasure)}    
\DeclareMathOperator*{\expect}{{\huge \mathbf{E}}}
\newcommand{\argthing}[2][nothing]{\ifthenelse{ \equal{#1}{nothing} }{\text{arg #2}}{\underset{#1}{\text{arg #2}}}}
\newcommand{\argmax}[1][nothing]{\argthing[#1]{max}}
\newcommand{\argmin}[1][nothing]{\argthing[#1]{min}}
\newcommand{\Borel}{\mathscr{B}}
\newcommand{\fdneighbor}[1][\epsilon]{\mathsf{A}_{#1}}
\renewcommand{\epsilon}{\varepsilon}
\newcommand{\changecolor}{black}
\newcommand{\revertchangecolor}{black}
\newcommand{\bigchange}[3][Anon]{
  \ifthenelse{ \equal{\revertchanges}{false} }{
    {\color{\changecolor}${}^{\text{#1:}}$ #3}
  }{
    {\color{\revertchangecolor} #2}
  }}
\newcommand{\editornote}[2]{\ifthenelse{\equal{\editmode}{true}}{\note[#1]{#2}}{}}
\newcommand{\marc}[1]{\editornote{Marc}{#1}}
\newcommand{\harley}[1]{\editornote{Harley}{#1}}
\newcommand{\statediffuse}[1][\Phi]{\mathbf{K}^x_{#1}}
\newcommand{\statsdiffuse}[1][\Phi]{\mathbf{K}^s_{#1}}
\begin{document}
\twocolumn[
\icmltitle{Distributional Hamilton-Jacobi-Bellman Equations for Continuous-Time Reinforcement Learning}
\begin{icmlauthorlist}
\icmlauthor{Harley Wiltzer}{mcgill,mila}
\icmlauthor{David Meger}{mcgill}
\icmlauthor{Marc G. Bellemare}{mila,google,cifar}
\end{icmlauthorlist}
\icmlaffiliation{mcgill}{McGill University, Montreal, Canada}
\icmlaffiliation{mila}{Mila -- Quebec AI Institute}
\icmlaffiliation{google}{Google Brain, Montreal, Canada}
\icmlaffiliation{cifar}{CIFAR Fellow}

\icmlcorrespondingauthor{Harley Wiltzer}{harley.wiltzer@mail.mcgill.ca}
\vskip 0.3in
]
\ifthenelse{\equal{\isdraft}{true}}{%
    \rhead{page \thepage}
}{%
    \SetWatermarkText{}
}
\printAffiliationsAndNotice{}
\newcommand{\ifconfelse}[3]{\ifthenelse{\equal{\conferencetemplate}{#1}}{#2}{#3}}
\newcommand{\ifconf}[2]{\ifconfelse{#1}{#2}{}}
\begin{abstract}

Continuous-time reinforcement learning offers an appealing formalism for describing control problems in which the passage of time is not naturally divided into discrete increments.
Here we consider the problem of predicting the distribution of returns obtained by an agent interacting in a continuous-time, stochastic environment. 
Accurate return predictions have proven useful for determining optimal policies for risk-sensitive control, learning state representations, multiagent coordination, and more.
We begin by establishing the distributional analogue of the Hamilton-Jacobi-Bellman (HJB) equation for It\^o diffusions and the broader class of Feller-Dynkin processes.\marc{We may be able to just add the max to your HJB. Let's discuss.}
We then specialize this equation to the setting in which the return distribution is approximated by $N$ uniformly-weighted particles, a common design choice in distributional algorithms.
Our derivation highlights additional terms due to \emph{statistical diffusivity} which arise from the proper handling of distributions in the continuous-time setting. Based on this, we propose a tractable algorithm for approximately solving the distributional HJB based on a JKO scheme, which can be implemented in an online control algorithm. We demonstrate the effectiveness of such an algorithm in a synthetic control problem.

\end{abstract}
\section{Introduction}\label{s:intro}


In continuous-time reinforcement learning \citep{munos1997convergent,munos1997reinforcement},
the expected return or \emph{value function} is characterized by a
partial differential equation (PDE) known as the
Hamilton-Jacobi-Bellman (HJB) equation \citep{Krylov1980ControlledDP,fleming2006controlled}.
This equation can be solved using numerical methods \citep{Munos2004ASO}, producing a policy that is optimal in the sense it maximises the expected return and avoids the error and computational costs associated with discretizing time.



This paper presents an analysis of the behavior of the \emph{distribution} over
returns in the continuous-time limit, as opposed to solely its
expectation.
Existing literature in \emph{distributional} reinforcement learning
(DRL) has demonstrated that modeling return distributions aids the
policy learning process, even when decisions are based only on the
expectations of the return distributions \citep{Bellemare2017ADP,
  hessel2018rainbow, Rowland48495}. Beyond that, statistics of the
return distributions may provide useful signals for exploration
\citep{mavrin2019distributional} and risk-sensitive behavior
\citep{prashanth2013actor,chow14algorithms,tamar15optimizing,dabney2018implicit, NEURIPS2019_f471223d,
  halperin2021distributional,prashanth21risksensitive}.

\textbf{A distributional HJB equation.} We first establish 
the distributional analogue to the HJB equation for a broad class of continuous-time environments, when the policy is fixed (the \emph{policy evaluation} setting). Because return distribution functions are infinite-dimensional objects (both in state and return), they are in general quite complex. However, we obtain a concise form of the distributional HJB by appealing to the notion of an infinitesimal generator \citep{rogers1994diffusions}, specifically applied to the cumulative distribution function of the return distribution.
This basic result extends to the expected-return control setting by obtaining an optimal policy from the usual HJB equation and subsequently solving the distributional HJB equation with this policy.

\textbf{Specialization to finitely-supported distributions.} In distributional RL, it is common to represent return distributions parametrically, for example with a finite collection of Dirac deltas. With care, this makes it possible to derive practical algorithms that find finite-memory approximations to the return distribution function.
Our second contribution is to specialize the distributional HJB equation to finite collections of statistical functionals and subsequently to what \citet{bdr2022} call the \emph{quantile probability representation}. The result is effectively a set of HJB equations and associated distributional constraints, one per parameter.

\textbf{Finite-difference algorithm for continuous-time distributional RL.} Finally, we extend the algorithm of \citet{munos1997reinforcement} for optimal control of continuous-time environments to the distributional setting. In particular, the inner loop of our algorithm involves finding distributional approximations by means of a JKO scheme previously employed by \citet{martin2020stochastically}.
Effectively, this method solves the quantile HJB equation at the desired level of accuracy, without explicitly discretizing time. In a synthetic experiment, we find that our technique produces far fewer artefacts than the equivalent discrete-time method.
\section{Setting}\label{s:background}

%

In this section we establish the mathematical framework that enables us to characterize the random return. To describe a continuous-time environment, we use the formalisms of Feller-Dynkin processes and \Ito\ diffusions. This is sufficient to establish the general distributional HJB equation; to derive more practical equations, however, we must also introduce notions from statistical functional theory.

Let $\mathscr{P}=\xxPSpace$ be a filtered probability space. The notation $\probset[p]{A}$ refers to the space of all probability measures with bounded $p$-moments, and $\probset{A}\equiv\probset[1]{A}$. Moreover, we denote by $\hessian{x}, \jacobian_x$ the Hessian and Jacobian operators, taken with respect to the $x$ variable.

\subsection{Continuous-time reinforcement learning}

We consider a continuous-time Markov decision process with a compact state space $\xxX \subseteq \R^d$ and a discrete action space $\xxA$.
The state and action processes are respectively $\indexedabove{t}{X} :\R_+\to\xxX$ and $\indexedabove{t}{A}:\R_+\to\xxA$. The actions are determined by a stochastic policy $\pi:\xxX\to\probset{\xxA}$ such that $A_t\sim\pi(\cdot\mid X_t)$. For a fixed policy, the state process is assumed to be a Feller-Dynkin process \citep{rogers1994diffusions} with transition semigroup $\indexedabove{t}{P^\pi}$; a primer on Feller-Dynkin processes and other continuous-time objects is given in Appendix
\ref{app:ctmp}. Finally, $r: \xxX \to \R$ is a bounded reward function.


When the agent exits the interior $\mathcal{O}$ of $\xxX$, we say that the process has stopped or \emph{terminated}, and no further rewards are earned.\footnote{A state $x \in \xxX \setminus \mathcal{O}$ is said to be \emph{terminal}.} We will assume that $\mathcal{O}$ is Borel-measurable. The agent's (random) exit time $T$ from $\mathcal{O}$, expressed as
\begin{align*}
    T = \inf\left\{t\in\R_+ : X_t\not\in\mathcal{O}\right\}
\end{align*}
is a stopping time with respect to the canonical filtration \citep{le2016brownian}.

The (discounted) return $\xxG{x}$ from state $x\in\xxX$ is the reward accumulated by following policy 
$\pi$ starting at state $x$, with rewards discounted exponentially in time by
$\gamma\in(0,1)$:
\begin{align}
  \xxG{x} &\triangleq \int_0^T\gamma^tr(X_t)dt\quad X_0 =
  x\label{eq:random-return}
\end{align}
Since $r$ is bounded, it follows that the discounted return is also bounded, and we express the space of returns by the interval $\returnspace=[V_{\min},V_{\max}]$. The \emph{value function} \citep{puterman2014markov} is the mapping $V^\pi:\xxX\to\returnspace$ defined pointwise by
\begin{equation*}
    V^\pi(x) \triangleq \expect [ \xxG{x} ] .
\end{equation*}
The distribution of $\xxG{x}$ is denoted by the probability measure $\returnmeasure^\pi(x)$ for each $x\in\xxX$:
\begin{align}
  \returnmeasure^\pi(x) &\triangleq \law{\xxG{x}}\label{eq:eta}
\end{align}
The mapping $\returnmeasure^\pi:\xxX\to\probset{\returnspace}$ is referred to as the \emph{return distribution function} \citep{bdr2022}. We equip $\returnspace$ with the Borel $\sigma$-algebra $\Borel(\returnspace)$ using the usual topology of the reals. 
We overload notation and write $\returnmeasure^\pi(x, A) = (\returnmeasure^\pi(x))(A)$.

The optimal control problem seeks a policy that maximizes the expected return. 
An \emph{optimal policy} $\pi^\star$ is one for which
\begin{align*}
    V^{\pi^\star}(x)\geq V^\pi(x) \qquad\forall\pi:\xxX\to\probset{\xxA}, \; x \in \xxX
\end{align*}
Because $\indexedabove{t}{X}$ is a Feller-Dynkin process, the value function is characterized by a partial differential equation (PDE) via the \emph{infinitesimal
  generator} $\xxL$ of the process.\footnote{Roughly, the infinitesimal generator of a Feller-Dynkin process with transition semigroup $(P_t)_{t\geq 0}$ satisfying $X_{t}\sim P_tX_0$ is the operator $\xxL$ satisfying $\xxL f = \lim_{t\downarrow 0}\mathbf{E}\frac{P_tf - f}{t}$ for each sufficiently smooth function $f$ on the state space. A formal definition is given in Appendix \ref{app:ctmp}.} This is established via the probabilistic
solutions to Kolmogorov backward PDEs \citep{kolmogoroff1931analytischen, le2016brownian}:
\begin{restatable}[Kolmogorov Backward Equation]{theorem}{kbe}\label{thm:kbe}
    Let $\indexedabove{t}{Y}:\R_+\to\mathcal{Y}$ be a Feller-Dynkin process in some space $\mathcal{Y}$, driven by an infinitesimal generator $\xxL$.
  Let $\mathcal{Y}^\circ\subset\mathcal{Y}$ be a measurable set with respect to the Borel $\sigma$-algebra on $\mathcal{Y}$, and let $S\in\R_+$ be the (random) exit time of $\indexedabove{t}{\mathcal{Y}}$ from $\mathcal{Y}^\circ$. It is assumed that $Y_0\in\mathcal{Y}^\circ$ and $\xxPSpaceMeasure(S<\infty) = 1$.
  For any
  measurable function $\phi$ that is absolutely continuous
  and differentiable almost everywhere, $u(t, y) =
  \mathbf{E}[\phi(Y_S)\mid Y_{t\land S} = y]$ solves
  \begin{equation}
    \label{eq:kbe}
    \partialderiv{u(t, y)}{t} = -\xxL u(t, y)
  \end{equation}
  with the terminal condition $u(t, y) = \phi(y)$ for all $y\notin \mathcal{Y}^\circ$.
\end{restatable}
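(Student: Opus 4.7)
The plan is to combine the strong Markov property of the Feller-Dynkin process with Dynkin's formula. The strategy is to exhibit the process $s \mapsto u(s \wedge S, Y_{s \wedge S})$ as a martingale in two different ways; matching the two semimartingale decompositions forces the drift to vanish, which is precisely the PDE in \eqref{eq:kbe}.

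The first decomposition comes directly from the definition of $u$. By the tower property, $N_s := \mathbf{E}[\phi(Y_S) \mid \mathcal{F}_{s \wedge S}]$ is a martingale. Applying the strong Markov property of $(Y_t)_{t \geq 0}$ at the stopping time $s \wedge S$, the conditional law of $Y_S$ given $\mathcal{F}_{s \wedge S}$ depends only on $Y_{s \wedge S}$, which identifies $N_s = u(s \wedge S, Y_{s \wedge S})$ almost surely.

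The second decomposition comes from Dynkin's formula. Assuming $u(s, \cdot)$ lies in the domain of $\mathscr{L}$ for each $s$ and that $s \mapsto u(s, y)$ is absolutely continuous with integrable time derivative, the time-space version of Dynkin's formula yields
\begin{equation*}
    u(s \wedge S, Y_{s \wedge S}) = u(0, Y_0) + \int_0^{s \wedge S} \bigl( \partial_r u + \mathscr{L} u \bigr)(r, Y_r)\, dr + M_s',
\end{equation*}
where $M_s'$ is a local martingale. Since the left-hand side is already a martingale by the first step, uniqueness of the Doob--Meyer decomposition forces the bounded-variation term to vanish almost surely for every starting point $y \in \mathcal{Y}^\circ$, yielding $\partial_t u(t, y) + \mathscr{L} u(t, y) = 0$ throughout the interior. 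The boundary condition is then immediate: if $Y_{t \wedge S} = y \notin \mathcal{Y}^\circ$ then the process has already exited, so $t \wedge S = S$ and $Y_S = y$, giving $u(t, y) = \phi(y)$.

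The main obstacle I anticipate is the regularity question: Dynkin's formula requires $u(t, \cdot) \in \mathrm{dom}(\mathscr{L})$, which is not immediate from the probabilistic definition. The assumptions $\mathsf{P}(S < \infty) = 1$ and absolute continuity of $\phi$, together with the Feller property of the semigroup, are what make sufficient regularity propagate from the terminal data into the interior, but a fully rigorous treatment would likely invoke the extended generator framework of \citet{rogers1994diffusions} or interpret \eqref{eq:kbe} in the weak (martingale problem) sense to avoid a priori smoothness hypotheses on $u$.
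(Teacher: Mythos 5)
Your proposal is correct and follows essentially the same route as the paper: both rest on the martingale associated with the infinitesimal generator (the paper invokes the martingale characterization of $\mathscr{D}(\mathscr{L})$ from \citet{le2016brownian} and differentiates the resulting martingale identity in time, while you package the same fact as Dynkin's formula and extract the PDE from uniqueness of the semimartingale decomposition). The regularity caveat you flag --- that $u(t,\cdot)\in\mathscr{D}(\mathscr{L})$ is not automatic from the probabilistic definition --- applies equally to the paper's argument, which commutes $\mathscr{L}$ with the conditional expectation without full justification.
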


The process $\indexedabove{t}{X}$ is called an \emph{\Ito\ diffusion} when
\begin{align}\label{eq:ito-diffusion}
    dX_t &= \xxf(X_t)dt + \xxs(X_t)dB_t
\end{align}
where $\xxf:\xxX\to\R^d, \xxs:\xxX\to\R^{d\times d}$ are the mean and diffusion of the stochastic dynamics of the agent controlled by the policy $\pi$, and $\indexedabove{t}{B}$ is a $\xxPSpaceMeasure$-Brownian motion. \footnote{An overview of Brownian motion is given in Appendix \ref{app:ctmp:brownian}.} Such processes have infinitesimal generators given by

\begin{equation}\label{eq:ito-diffusion:generator}
\begin{aligned}
  \xxL\psi(x) &= \langle\nabla_x\psi(x), \xxf(x)\rangle
  \ifthenelse{\equal{\conferencetemplate}{icml2022}}{\\&\qquad}{}
  + \frac{1}{2}\trace\left(\quadraticform{\xxs(x)}{\hessian{x}\psi(x)}\right)
\end{aligned}
\end{equation}
where $\hessian{x}$ is the Hessian operator with respect to $x$ and $\trace$ is the trace operator. Additionally, we assume that $\xxf,\xxs$ are continuous and differentiable almost everywhere, and that $\xxs(x)\succeq 0$ for each $x\in\xxX$.

Writing $u(t, x) = \ConditionExpect{\xxG{x}}{X_t=x} = V^\pi(x)$, we have
\begin{align*}
    \partialderiv{}{t}V^\pi(x) &= -\xxL V^\pi(x)
\end{align*}
Moreover, the Bellman equation \citep{bellman1957markovian} gives\marc{I think the $V$ in the equation below is missing a star. Then the sup will not make sense but ... gotta figure it out.}
\begin{align*}
    &V(X_t) = \sup_\pi \Expectation{\substack{X_{t+\Delta}\\\quad\sim P_\Delta,\pi}}{\int_0^\Delta\gamma^sr(X_{t + s})ds + \gamma^\Delta V^\pi(X_{t+\Delta})}\\
    &\partialderiv{}{t}V(X_t) = r(X_t) + \log\gamma V(X_t)
\end{align*}
Substituting into the Kolmogorov backward equation yields
\begin{align*}
    r(x) + \log\gamma V(x) + \sup_\pi\left\{\xxL V^\pi(x)\right\} = 0
\end{align*}
Expanding the expression for the generator in \eqref{eq:ito-diffusion:generator} we have
\begin{equation}\label{eq:hjb}\tag{HJB}
    \begin{aligned}
      &\sup_\pi\bigg\{r(x) + \langle\nabla V(x), \xxf(x)\rangle
      \ifconf{icml2022}{\\&\qquad\qquad}
      +\frac{1}{2}\trace\left(\quadraticform{\xxs(x)}{\hessian{}V(x)}\right)\bigg\}
      \ifconf{icml2022}{\\&\qquad\qquad\qquad}
      +\log\gamma V(x) = 0
    \end{aligned}
\end{equation}
which is the stochastic Hamilton-Jacobi-Bellman (HJB) equation \citep{fleming2006controlled}.

\subsection{Distributional reinforcement learning}

In \emph{distributional} RL, we aim to learn the probability
distribution $\returnmeasure^\pi(x)$ over $\xxG{x}$ as opposed to only its expectation.
A good approximation to the return distribution can be obtained by modelling particular statistics of the distribution, based on the notion of statistical functionals \citep{Rowland48495,bdr2022}.

\begin{definition}[Statistical functional, sketch]\label{def:statistical-functional}
A statistical functional maps each probability distribution to a real number.
A \emph{sketch} is a collection of statistical functionals, equivalently a mapping $\mathbf{s}:\probset[]{\R}\to\R^N$ that maps probability measures to ordered sets of real numbers (statistics).
\end{definition}
In this paper we will be interested in \emph{quantile functionals}, which effectively invert the CDF $\cdf{\nu}$ of a measure $\nu$. For a return distribution function $\returnmeasure$, let us write $\cdf{\returnmeasure}(x, z) = \returnmeasure(x, [V_{\min}, z])$. The quantile functionals $\quantile{\tau}$ are
\begin{equation*}
    \quantile{\tau}(\returnmeasure(x)) = \inf\left\{z\in\returnspace : \cdf{\returnmeasure}(x,z) = \tau\right\} \qquad \tau\in(0,1)
\end{equation*}
\begin{definition}[Imputation strategy]\label{def:imputation-strategy}
Define a set $\xxStatsSet{\Phi}\subset\R^N$ corresponding to a space of statistics. An \emph{imputation strategy} is a mapping $\Phi:\xxStatsSet{\Phi}\to\probset{\R}$. The set $\xxStatsSet{\Phi}$ is referred to as the set of \emph{admissible statistics} for $\Phi$.
\end{definition}
As with return distribution functions, for a vector $\statistics \in \xxStatsSet{\Phi}$ and $A \subseteq \R$ we write
\begin{equation*}
    \Phi(\statistics, A) = \Phi(\statistics)(A) .
\end{equation*}
We use imputation strategies to map statistical functional values back to distributions. In the sequel we consider the imputation strategy that maps a set of quantiles to what \citet{bdr2022} call a \emph{quantile distribution}.

\begin{definition}[Quantile Distribution]
  Let $\indexedint{k}{N}{y}$ be elements of a set $\mathcal{Y}$.
  The quantile distribution over $\mathcal{Y}$ with quantiles $\indexedint{k}{N}{y}$ is a
  probability measure $\nu$ given by
  \begin{align*}
    \nu(A) = \frac{1}{N}\sum_{k=1}^N\dirac{y_k}(A), \; A \in \Borel(\R) .
  \end{align*}%
\end{definition}%
Our aim will be to incorporate these two elements -- sketch and imputation strategy -- into a distributional HJB equation in order to produce a system of equations that can be approximated with standard numerical methods.

In our continuous-time formulations, we will analyze differential quantities of $\returnmeasure^\pi$ with respect to both the state space and the return space. As such, we will often find it more convenient to express return distributions $\returnmeasure^\pi(x)$ by their CDFs, which have a substantially simpler domain to differentiate over. We express these CDFs by $\cdf{\returnmeasure}:\xxX\times\returnspace\to[0,1]$, where $\cdf{\returnmeasure}(x, z) = \returnmeasure(x, [V_{\min},z])$.

\section{Distributional HJB Equations}\label{s:dhjb}
We will now shift our focus to formally representing the return
distribution function for an RL agent evolving continuously in time
with a fixed policy. In order to do so, it will be
necessary to impose some structural and regularity properties on the
dynamics of the environment and on the return distributions.


\begin{assumption}
  \label{ass:method:density}
  At every state $x\in\mathcal{X}$, the return distribution
  $\returnmeasure^\pi(x)$ is absolutely continuous with respect to the
  Lebesgue measure.
\end{assumption}

Although Assumption \ref{ass:method:density} can be violated in various MDPs,
particularly when dynamics are deterministic and the reward function is not
continuous, we note that such issues can easily be remedied in practice by adding low-variance white noise to the rewards, for example.

\begin{assumption}
  \label{ass:method:c2}
  The mapping $(x, z)\mapsto\cdf{\returnmeasure^\pi}(x, z)$ is twice differentiable over
  $\mathcal{X}\times\mathcal{R}$ almost everywhere, and its second
  partial derivatives are continuous almost everywhere.
\end{assumption}

All omitted proofs in the sequel will be provided in Appendix \ref{app:proofs}.

\subsection{Stochastic Return Processes}\label{s:dhjb:truncated-returns}
We would like to understand how estimates of the random return should
evolve over time, using the machinery of stochastic calculus \citep{le2016brownian}.
However, a function mapping states to (random) returns cannot be progressively 
measurable
(see Appendix \ref{app:stochastic}), as it requires knowledge of an entire 
trajectory.
Our solution is to introduce an intermediate stochastic process as a
``gateway'' to the random return.

\begin{definition}[The Truncated Return Process]\label{def:truncated-return}
  The \emph{truncated return process} is a stochastic process
  $(\xxJointFD_t)_{t\geq 0}\in\R_+\times\mathcal{X}\times\returnspace$ given by
  \begin{equation*}
    \begin{aligned}
      \xxJointFD_t = (t, X_t, \xxTruncRet_t)\qquad\xxTruncRet_t = \int_0^t\gamma^sr(X_s)ds
    \end{aligned}
  \end{equation*}
\end{definition}
  The values $\xxTruncRet_t$ are simply the discounted rewards
  accumulated up to time $t$, and $\xxTruncRet_0 = 0$.
  
\begin{proposition}\label{pro:markov}
  The truncated return process is a Markov process
  w.r.t. \hyperref[def:canonical-filtration]{the canonical filtration}.
\end{proposition}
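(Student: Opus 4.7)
I would show directly that the conditional law of $\xxJointFD_{t+h}$ given $\xxF_t$ depends on the past only through $\xxJointFD_t = (t, X_t, \xxTruncRet_t)$. Two ingredients suffice: the additive structure of $\xxTruncRet$, which peels off its $\xxF_t$-measurable history, and the Markov property of the underlying Feller--Dynkin process $\indexedabove{t}{X}$, which handles the post-$t$ increment.

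First, for $h\geq 0$ I would write
\begin{equation*}
    \xxTruncRet_{t+h} = \xxTruncRet_t + \int_t^{t+h}\gamma^s r(X_s)\,ds,
\end{equation*}
so that $\xxJointFD_{t+h}$ is a deterministic function of $(t, \xxTruncRet_t)$ together with $X_{t+h}$ and the path integral $\int_t^{t+h}\gamma^s r(X_s)\,ds$. Since the first two quantities are $\xxF_t$-measurable and the last is a measurable functional of the post-$t$ path $(X_s)_{s\geq t}$, for any bounded measurable test $f:\R_+\times\xxX\times\returnspace\to\R$ the conditional expectation $\mathbf{E}[f(\xxJointFD_{t+h})\mid\xxF_t]$ reduces to an $\xxF_t$-conditional expectation of a functional of $(X_s)_{s\geq t}$ parameterized by the $\xxF_t$-measurable pair $(t, \xxTruncRet_t)$. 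Invoking the Markov property of $\indexedabove{t}{X}$ with respect to the canonical filtration then collapses this conditional expectation into a deterministic function $g(t, X_t, \xxTruncRet_t)$, which is precisely the Markov property for $\indexedabove{t}{\xxJointFD}$. Note that the canonical filtration of $\xxJointFD$ coincides with that of $X$, since $\xxTruncRet_\cdot$ is an adapted functional of the latter.

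The only subtlety is that the integrand $\gamma^s r(X_s)$ is explicitly $s$-dependent, so the post-$t$ integral is not a time-homogeneous functional of the $X$-trajectory; absent a time coordinate, $(X_t, \xxTruncRet_t)$ alone would be insufficient to determine the future law. This is precisely why absolute time is bundled into $\xxJointFD$: conditioning on the $\xxF_t$-measurable value of $t$ fixes the discount factor along $[t,t+h]$, recovering a well-defined path functional whose conditional law factors through $X_t$. Beyond this bookkeeping—and a standard monotone-class argument to promote the equality from indicator test functions to all bounded measurable $f$—no deeper obstacle arises.
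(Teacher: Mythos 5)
Your proposal is correct and follows essentially the same route as the paper: decompose $\xxTruncRet_{t+h} = \xxTruncRet_t + \int_t^{t+h}\gamma^s r(X_s)\,ds$, observe that the first term is $\xxF_t$-measurable and the remainder is a functional of the post-$t$ path, and invoke the Markov property of $\indexedabove{t}{X}$ to collapse the conditional expectation to a function of the current value of $\xxJointFD$. Your explicit remark that the $s$-dependence of $\gamma^s r(X_s)$ is what forces the time coordinate into $\xxJointFD_t$ is in fact a point the paper's own proof glosses over (its function $m$ is written as depending only on $(X_t,\xxTruncRet_t)$), so your version is, if anything, slightly more careful.
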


The (discounted) random return can be expressed in terms of the
truncated return process. If the process $\indexedabove{t}{X}$ halts at the random exit time $T$, then $\xxTruncRet_T$ \emph{is} the return:
\begin{equation}\label{eq:random-return-truncated-return}
    \xxTruncRet_T\eqlaw \xxG{x}\qquad X_0 = x,
\end{equation}
where $\eqlaw$ denotes equality in distribution.
It will be convenient to encapsulate this identity in a time-homogeneous manner, since we would like to evaluate return distributions at each state as opposed to only the initial state $X_0$. This is captured by the \emph{conditional backward return process}.

\marc{If time, change $z$ here to be a more evocative variable (could be as simple as back-arrow-z.).}
\begin{definition}[Conditional Backward Return Process]\label{def:conditional-backward-return}
  Let $z\in\returnspace$ be a desired target return. The \emph{conditional backward return
  process} $\indexedabove{t}{\cbrprocess(z)}:\R_+\to\returnspace$ is given by
  \begin{align*}
    \cbrprocess(z)_t &= \gamma^{-t}(z - \xxTruncRet_t)
  \end{align*}
  Likewise, we define the joint process $\indexedabove{t}{\xxJointFDCB(z)}$ where $\xxJointFDCB(z)_t = (X_t, \cbrprocess(z)_t)$.
\end{definition}
Unlike the truncated return process which accumulates rewards ``forward in
time", the conditional backward return process conditions on a given return
$z$ and describes the residual discounted rewards needed to attain a return of
$z$. 

\subsection{A Characterization of the Return Distributions}\label{s:dhjb:characterization}

At this point, let us remark on the joint state-return process $(\xxJointFDCB(z)_t)_{t \ge 0}$. Because $X_t$ is $d$-dimensional and the return is bounded in $[V_{\min},V_{\max}]$, the joint process is effectively $(d+1)$-dimensional. Our goal is thus to derive, using the Kolmogorov backward equation, the PDE that characterizes the evolution of this joint process. The solution of this PDE is then the desired continuous-time return distribution function.
\begin{restatable}{lemma}{cbrkbe}\label{lem:cbr-kbe}
Let $z \in \returnspace$ be a desired return, and suppose that $\indexedabove{t}{\xxJointFDCB(z)}$ is a Feller-Dynkin process with infinitesimal generator $\xxL_J$. Then at each $x\in\mathcal{O}$ and $z'\in\returnspace$, $\returnmeasure^\pi$ satisfies
\begin{align}\label{eq:cbr-kbe}
    \xxL_J\cdf{\returnmeasure^\pi}(x, z') = 0
\end{align}%
\end{restatable}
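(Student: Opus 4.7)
My plan is to apply the Kolmogorov backward equation (Theorem \ref{thm:kbe}) to the Feller-Dynkin process $\indexedabove{t}{\xxJointFDCB(z)}$ with a suitable terminal functional, identify the resulting conditional expectation with $\cdf{\returnmeasure^\pi}(x, z')$, and observe that this identification is actually $t$-independent, so that the time derivative in the KBE vanishes and the generator applied to the CDF must be zero.

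Concretely, I would first note that for the state process with exit time $T$ we have $\xxTruncRet_T \leq z$ if and only if $\cbrprocess(z)_T = \gamma^{-T}(z - \xxTruncRet_T) \geq 0$. Taking $\phi(x'', z'') := \mathbf{1}\{z'' \geq 0\}$, I set
\begin{equation*}
u(t, (x, z')) \;:=\; \mathbf{E}\bigl[\phi\bigl(\xxJointFDCB(z)_T\bigr)\,\big|\,\xxJointFDCB(z)_{t \wedge T} = (x, z')\bigr].
\end{equation*}
Using the time-homogeneous Markov property (Proposition \ref{pro:markov}) together with the decomposition $\xxTruncRet_T - \xxTruncRet_t = \gamma^t \int_0^{T-t}\gamma^u r(X_{t+u})\,du$, one has, conditional on $X_t$, the equality in distribution $\xxTruncRet_T - \xxTruncRet_t \eqlaw \gamma^t\, G'$ where $G'$ is a fresh copy of $\xxG{X_t}$. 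Conditioning further on $\cbrprocess(z)_t = z'$ fixes $z - \xxTruncRet_t = \gamma^t z'$, so that $\{\xxTruncRet_T \leq z\} = \{G' \leq z'\}$. Via \eqref{eq:random-return-truncated-return}, this yields $u(t, (x, z')) = \cdf{\returnmeasure^\pi}(x, z')$, which does not depend on $t$. Theorem \ref{thm:kbe} then gives $0 = \partial_t u = -\xxL_J u$, which is the claim. The terminal condition $u(t, y) = \phi(y)$ at exit states is consistent because the return from any terminal state is deterministically zero, so $\cdf{\returnmeasure^\pi}(x, z') = \mathbf{1}\{z' \geq 0\}$ agrees with $\phi$ there.

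The main obstacle I anticipate is that $\phi$, being the indicator of a half-line, is not absolutely continuous as Theorem \ref{thm:kbe} requires. I would address this by approximating $\phi$ by a sequence of smooth bounded $\phi_n$ converging pointwise to $\phi$ (e.g., mollified Heaviside steps), applying the KBE to each resulting $u_n$, and passing to the limit via dominated convergence, leveraging Assumption \ref{ass:method:c2} to ensure $\cdf{\returnmeasure^\pi}$ is regular enough to lie in the domain of $\xxL_J$. A secondary point to verify is that conditioning the joint process on $\xxJointFDCB(z)_{t\wedge T}$ truly collapses the remaining-return distribution to the appropriately rescaled return distribution from the current state; this follows from Proposition \ref{pro:markov} together with the deterministic evolution of the discount factor embedded in $\cbrprocess(z)$.
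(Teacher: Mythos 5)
Your proposal follows essentially the same route as the paper's own proof: the same terminal functional $\phi(x,z'') = \mathbf{1}\{z''\geq 0\}$, the same identification of $u(t,(x,z')) = \mathbf{E}[\phi(\xxJointFDCB(z)_T)\mid \xxJointFDCB(z)_t=(x,z')]$ with $\cdf{\returnmeasure^\pi}(x,z')$ via the rescaling $z - \xxTruncRet_t = \gamma^t z'$, and the same observation that time-homogeneity kills $\partial_t u$ in the Kolmogorov backward equation. Your mollification remark addresses a point the paper's proof passes over silently (the indicator is not absolutely continuous as Theorem \ref{thm:kbe} nominally requires), so if anything your version is slightly more careful on that hypothesis.
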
%
We are now ready to introduce the characterization of the return
distribution function in continuous time.
In the remainder of the text, the
notation $\proj{k}$ will be used to denote the coordinate projection
operators, where $\proj{k}(a_1, a_2,\dots,a_k,\dots) = a_k$.

\begin{restatable}[Distributional HJB Equation for Policy Evaluation]{theorem}{dhjb}
  \label{thm:dhjb}
  Denote by $\xxL_X$ the infinitesimal generator of the process $\indexedabove{t}{X} = \indexedabove{t}{\proj{1}\xxJointFDCB(z)}$.
  Moreover, suppose Assumptions \ref{ass:method:density} and
  \ref{ass:method:c2} hold.
  Then $\cdf{\returnmeasure^\pi}$ satisfies
  \begin{equation}
    \label{eq:dhjb}
    \small
    \begin{aligned}
      (\mathscr{L}_X\cdf{\returnmeasure^\pi}(\cdot, z))(x) -
      (r(x) + z\log\gamma)\partialderiv{}{z}\cdf{\returnmeasure^\pi}(x, z)
      = 0&
      \ifconfelse{icml2022}{\\}{&&}
      \xxPSpaceMeasure-\text{almost surely}&
    \end{aligned}
  \end{equation}
\end{restatable}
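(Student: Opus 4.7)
The plan is to apply Lemma \ref{lem:cbr-kbe} and reduce the identity $\xxL_J \cdf{\returnmeasure^\pi}(x, z) = 0$ to the claimed equation by explicitly computing the generator $\xxL_J$ of the joint process $(X_t, \cbrprocess(z)_t)_{t\geq 0}$ in terms of $\xxL_X$ and a first-order differential operator in $z$ coming from the backward return coordinate.

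First I would analyze the infinitesimal evolution of the second coordinate. Since $\cbrprocess(z)_t = \gamma^{-t}(z - \xxTruncRet_t)$ and $\xxTruncRet_t = \int_0^t \gamma^s r(X_s)\,ds$ is absolutely continuous in $t$, differentiating gives
\begin{equation*}
\frac{d}{dt}\cbrprocess(z)_t = -(\log\gamma)\cbrprocess(z)_t - r(X_t),
\end{equation*}
so at $t = 0$ with $X_0 = x$ and $\cbrprocess(z)_0 = z$, the drift is $-(r(x) + z\log\gamma)$. Crucially, this coordinate carries no independent stochasticity; all its randomness is inherited from $X$, so the $z$-component of the joint process has finite variation.

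Next I would argue that the generator decomposes as
\begin{equation*}
\xxL_J \phi(x, z) = (\xxL_X \phi(\cdot, z))(x) - (r(x) + z\log\gamma)\partialderiv{\phi}{z}(x, z)
\end{equation*}
for sufficiently smooth $\phi$. In the \Ito\ diffusion case this is immediate from \Ito's formula applied to $\phi(X_t, \cbrprocess(z)_t)$: the cross-variation between $X$ and $\cbrprocess(z)$ vanishes because $\cbrprocess(z)$ has finite variation, and the Brownian martingale term has zero expectation, leaving precisely the two terms above upon dividing by $t$ and passing to the limit. For the broader Feller-Dynkin setting, the same decomposition follows by Taylor-expanding $\phi$ separately in its two arguments: the $x$-increments produce $\xxL_X$ by definition, and the pathwise-smooth $z$-increments produce the first-order transport term.

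Finally, setting $\phi = \cdf{\returnmeasure^\pi}$, which by Assumptions \ref{ass:method:density} and \ref{ass:method:c2} has the regularity required for both $\xxL_X$ and $\partial/\partial z$ to be well-defined almost everywhere, and invoking Lemma \ref{lem:cbr-kbe} yields the equation. The main obstacle I anticipate is rigorously justifying the generator decomposition under only the generic Feller-Dynkin hypothesis on $X$, since this requires verifying that adjoining a pathwise-smooth, $X$-driven coordinate to a Feller-Dynkin process preserves the Feller property of the joint process and contributes only the expected first-order term to the generator; Proposition \ref{pro:markov} handles the Markov property, but the Feller regularity and the limit-exchange steps must be checked carefully.
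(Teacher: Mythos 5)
Your proposal is correct and follows essentially the same route as the paper: both reduce the claim to the Kolmogorov backward equation for the joint process via Lemma \ref{lem:cbr-kbe}, and then decompose the generator $\xxL_J$ into $(\xxL_X\cdot)(x)$ plus the first-order transport term $-(r(x)+z\log\gamma)\partial_z$, using the finite-variation property of the accumulated-reward coordinate to annihilate the cross-variation. The only organizational difference is that the paper computes the generator of the forward truncated return process $(t, X_t, \xxTruncRet_t)$ via It\^o's lemma (Lemmas \ref{lem:finite-variation} and \ref{cor:generator}) and then changes variables to the backward coordinate, whereas you differentiate $\cbrprocess(z)_t$ directly to read off the same drift; the mathematical content, including the obstacle you flag about preserving the Feller property of the augmented process, is exactly what those supporting lemmas address.
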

\marc{Is the derivative well-defined? Is this because of the assumptions?}
\harley{See proof}
Theorem \ref{thm:dhjb} admits a useful corollary when the agent
evolves according to an \Ito\ diffusion.

\begin{corollary}[Policy Evaluation of \Ito\ Diffusions]\label{cor:dhjb}
  In the setting of Theorem \ref{thm:dhjb}, if the state process
  $\indexedabove{t}{X}$ is governed by the \Ito\ diffusion of \eqref{eq:ito-diffusion}, the return distribution function $\returnmeasure^\pi$ satisfies for each $x \in \xxX$ and $z\in\returnspace$,
  \begin{equation}
    \label{eq:dhjb:ito}
    \small
    \begin{aligned}
      0 &= \langle\nabla_x\cdf{\returnmeasure^\pi}(x, z), \xxf(x)\rangle
      - \left(r(x) + z\log\gamma\right)\partialderiv{}{z}\cdf{\returnmeasure^\pi}(x, z)
      \ifconf{icml2022}{\\&\qquad}
      +\frac{1}{2}\trace\left(\quadraticform{\pmb{\sigma}_\pi(x)}{\hessian{x}\cdf{\returnmeasure^\pi}(x,
          z)}\right)
    \end{aligned}%
  \end{equation}%
\end{corollary}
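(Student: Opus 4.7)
The plan is to invoke Theorem \ref{thm:dhjb} directly and then replace the abstract generator $\xxL_X$ with its explicit form for an \Ito\ diffusion, as given by \eqref{eq:ito-diffusion:generator}. The corollary is essentially a substitution result, so most of the work consists in verifying that the regularity conditions required to apply the generator pointwise to the CDF are met.

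First I would observe that $\xxL_X$ in \eqref{eq:dhjb} is the infinitesimal generator of the marginal state process $\indexedabove{t}{X} = \indexedabove{t}{\proj{1}\xxJointFDCB(z)}$, which by assumption satisfies the SDE in \eqref{eq:ito-diffusion}. Consequently, on its domain $\xxL_X$ acts on a sufficiently smooth $\psi:\xxX\to\R$ via
\[
(\xxL_X \psi)(x) = \langle \nabla_x \psi(x), \xxf(x)\rangle + \tfrac{1}{2}\trace\bigl(\quadraticform{\xxs(x)}{\hessian{x}\psi(x)}\bigr),
\]
as recorded in \eqref{eq:ito-diffusion:generator}. Next I would verify that for each fixed $z\in\returnspace$ the partial function $x\mapsto \cdf{\returnmeasure^\pi}(x,z)$ lies in the domain of $\xxL_X$ in the sense required by Theorem \ref{thm:dhjb}: Assumption \ref{ass:method:c2} supplies the continuity of the second partial derivatives in $x$ almost everywhere, while the standing hypotheses on $\xxf$ and $\xxs$ (continuity and a.e.\ differentiability, with $\xxs \succeq 0$) ensure the right-hand side above is well-defined $\xxPSpaceMeasure$-almost surely.

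Finally I would substitute this explicit expression into \eqref{eq:dhjb}, treating the term $(r(x) + z\log\gamma)\partialderiv{}{z}\cdf{\returnmeasure^\pi}(x,z)$ as-is, since it already isolates the $z$-derivative contribution coming from the dynamics of the conditional backward return process $\indexedabove{t}{\cbrprocess(z)}$. Rearranging yields \eqref{eq:dhjb:ito}.

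The main obstacle is purely a bookkeeping concern: one must be careful that $\xxL_X$ acts only in the state variable $x$, not in the return argument $z$, so that the Hessian and gradient in \eqref{eq:dhjb:ito} are unambiguously the $x$-Hessian and $x$-gradient of $\cdf{\returnmeasure^\pi}(\cdot,z)$ with $z$ held fixed. Once this is explicit, the corollary reduces to pattern matching between \eqref{eq:ito-diffusion:generator} and the generator term in \eqref{eq:dhjb}.
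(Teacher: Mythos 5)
Your proposal matches the paper's proof exactly: the paper likewise proves this corollary by substituting the explicit Itô-diffusion generator of \eqref{eq:ito-diffusion:generator} for $\mathscr{L}_X$ in \eqref{eq:dhjb}. Your additional remarks on domain verification and on keeping $z$ fixed are sensible elaborations of the same one-line argument.
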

\begin{proof}
  This result follows directly from Theorem \ref{thm:dhjb}, since the
  \hyperref[def:fd]{infinitesimal generator} $\mathscr{L}_X$ of an \Ito\
  diffusion is given by \eqref{eq:ito-diffusion:generator}.
\end{proof}
Note that the term $\partialderiv{}{z}\cdf{\returnmeasure^\pi}(x,\cdot)$ is the density of the return distribution at $x$. When the policy and environment dynamics are deterministic, we can relate Equation \eqref{eq:dhjb:ito} to \eqref{eq:hjb} by interpreting the derivatives using what is called the theory of distributions (an unfortunately-named class of objects that are usually not probability distributions; see Appendix \ref{app:distributions}), and setting $\pi = \pi^*$.


\subsection{Finitely-Parametrized Return Distributions}\label{s:dhjb:representation}
We now turn our attention to approximating the infinite-dimensional CDF $F_{\eta^\pi}(x, \cdot)$. Specifically, we consider what happens to Corollary \ref{cor:dhjb} when return distributions are represented by a \emph{statistics function} $\statistics{} : \xxX \to \xxStatsSet{\Phi}$, the statistical functional analogue of a value function $V : \xxX \to \R$. This statistics function corresponds to the values of $N$ statistical functionals, which can be transformed into a probability distribution by means of the imputation strategy $\Phi$. Consequently, we make the approximation
\begin{equation*}
    \returnmeasure^\pi(x) \approx \Phi(\statistics(x)) .
\end{equation*}
With this approximation, each return distribution can be represented in memory. The approximate distributional policy evaluation problem is then to determine a statistics function $\statistics$ that satisfies the It\^o Diffusion HJB.
In order to derive a robust characterization of the return
distribution function in the proposed manner, we will
require a mild regularity condition on the imputation strategy.

\begin{definition}[Statistical Smoothness]\label{def:statistical-smoothness}
  An imputation strategy $\Phi:\xxStatsSet{\Phi}\to\probset[p]{\returnspace}$ is said to be
  \emph{statistically smooth} if
  $\Phi(s)$ is a
  \hyperref[def:tempered-distribution]{tempered distribution} (see Appendix
  \ref{app:distributions}) for each $s\in\xxStatsSet{\Phi}$. Likewise, a return distribution function $\returnmeasure$ is said to be statistically smooth if $\cdf{\returnmeasure}(x, \cdot)$ is a tempered distribution for each $x\in\xxX$ and $\cdf{\returnmeasure}(\cdot, z)$ is twice continuously differentiable almost everywhere for each $z\in\returnspace$.
\end{definition}
\marc{One sentence explaining what this means. Also we use $\probset[p]{\cdot}$ above but $\probset[]{\cdot}$ below. Is $p$ used at all in this paper?}
\begin{definition}[Spatial Diffusivity]\label{def:diffusivity:spatial}
  Let $\Phi:\xxStatsSet{\Phi}\to\probset{\returnspace}$ be a statistically
  smooth imputation
  strategy, let $\statistics(x)$ be a statistics function, and suppose that $\indexedabove{t}{X}$ is governed by the \Ito\ diffusion of \eqref{eq:ito-diffusion}. 
  The \emph{spatial diffusivity} of the random return under the imputation
  strategy $\Phi$ is defined as the mapping
  $\statediffuse:\mathcal{X}\times\mathcal{R}\to\mathbf{R}^{d\times d}$ given by
  \begin{align*}
    \statediffuse(x,z) &=
    \sum_{k=1}^N\partialderiv{}{\proj{k}\statistics(x)}\Phi(\statistics(x), [V_{\min}, z])\hessian{x}\proj{k}\statistics(x)
  \end{align*}
  where $\hessian{x}$ is the Hessian operator with respect to $x$.
\end{definition}
Spatial diffusivity relates the stochasticity
of the approximate return distribution to the stochasticity of the state process. We will also identify a similar term relating the stochasticity of the return to the variability of the statistics as a result of the  stochasticity in the state process.\marc{Harley, this sentence is not clear. I tried to fix it a bit but I'm not sure what it's trying to say.}

\begin{definition}[Statistical Diffusivity]\label{def:diffusivity:statistical}
  Let $\Phi:\xxStatsSet{\Phi}\to\probset{\returnspace}$ be a statistically
  smooth imputation strategy and
  $\indexedabove{t}{X}:\R_+\to\mathcal{X}\subset\mathbf{R}^d$ the \Ito\ diffusion
  \eqref{eq:ito-diffusion}.
  The \emph{statistical
  diffusivity} of the random return under the imputation strategy $\Phi$ is
  defined as the mapping
  $\statsdiffuse:\mathcal{X}\times\returnspace\to\mathbf{R}^{d\times d}$ given
  by
  \begin{align*}
    \statsdiffuse(x,z) &=
    \quadraticform{\jacobian_x\statistics{}(x)}{\left(\hessian{\statistics(x)}\Phi(\statistics(x), [V_{\min}, z])\right)}
  \end{align*}
\end{definition}
We can now characterize the return distribution function as a PDE with respect to the
statistics function. Notably, this generalizes to all return distribution 
parameterizations that can be expressed by statistical functionals and imputation
strategies, such as those employed by categorical
\citep{Bellemare2017ADP}, quantile \citep{Dabney2018DistributionalRL},
and expectile \citep{Rowland48495} TD-learning.
\begin{restatable}[The Statistical HJB
Loss for Policy Evaluation]{theorem}{shjb}\label{thm:shjb}
Let $\Phi$ be a statistically smooth imputation strategy with a corresponding set of admissible statistics $\xxStatsSet{\Phi}$, and let $\statistics:\xxX\to\xxStatsSet{\Phi}$ be a statistics function. We define the mapping $\Psi(\statistics(x), z) = \Phi(\statistics(x), [V_{\min}, z])$. The \emph{Statistical HJB Loss} $\mathcal{L}_S$ is defined as
\begin{equation}\label{eq:shjb:loss}
\small
\begin{aligned}
      \ifconf{icml2022}{&}\mathcal{L}_S(\statistics, \Psi) =
      \ifconf{icml2022}{\\&\quad}
      \bigg[\measurement{\nabla_{\statistics(x)}\Psi(\statistics(x),
        z)}{\statistics{}_x(x)}{\xxf(x)}
        \ifconf{icml2022}{\\&\quad\ }
        -(r(x) + \log\gamma
      z)\partialderiv{}{z}\Psi(\statistics(x), z)\ifconf{neurips2022}{&}\\
      \ifconf{icml2022}{&\quad\ }
      +\frac{1}{2}\trace\left(\quadraticform{\pmb{\sigma}_\pi(x)}{\left(\statediffuse(x,z) +
        \statsdiffuse(x,z)\right)}\right)\bigg]^2\ifconf{neurips2022}{&}
\end{aligned}
\end{equation}
where $\statistics{}_x\triangleq\jacobian_x\statistics$. Let the assumptions of Corollary
\ref{cor:dhjb} hold. Then if $\cdf{\returnmeasure}$ satisfies \eqref{eq:dhjb:ito} and $\cdf{\returnmeasure(x)}=\Phi(\statistics(x))$ for each $x\in\xxX$, we have
  \begin{equation}
    \label{eq:shjb}
    \mathcal{L}_S(\statistics, \Phi) = 0
  \end{equation}
\end{restatable}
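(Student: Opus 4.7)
The plan is to substitute the parametric representation $\cdf{\returnmeasure^\pi}(x, z) = \Psi(\statistics(x), z)$ into the It\^o distributional HJB of Corollary~\ref{cor:dhjb} and expand the $x$-derivatives via the chain rule; doing so will show that the bracketed quantity in~\eqref{eq:shjb:loss} coincides pointwise with the LHS of~\eqref{eq:dhjb:ito} applied to this parametric CDF, which vanishes by hypothesis.

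The first-order term is handled by one application of the chain rule:
\begin{align*}
\nabla_x\Psi(\statistics(x), z) = (\jacobian_x\statistics(x))^\top\nabla_{\statistics(x)}\Psi(\statistics(x), z),
\end{align*}
so $\langle\nabla_x\cdf{\returnmeasure^\pi}(x, z), \xxf(x)\rangle$ turns into the trilinear form involving $\nabla_{\statistics}\Psi$, $\statistics_x$, and $\xxf$ that appears in~\eqref{eq:shjb:loss}. The $(r(x) + z\log\gamma)\partialderiv{}{z}\Psi$ term is already in its final form, with the statistical smoothness of $\Phi$ ensuring that $\partialderiv{}{z}\Psi(\statistics(x), \cdot)$ is a well-defined (possibly distributional) density.

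The key step is expanding the Hessian $\hessian{x}\Psi(\statistics(x), z)$. Differentiating the gradient identity once more in $x$ and collecting terms yields
\begin{align*}
\hessian{x}\Psi(\statistics(x), z) = \sum_{k=1}^N\partialderiv{}{\proj{k}\statistics(x)}\Psi(\statistics(x), z)\,\hessian{x}\proj{k}\statistics(x) + (\jacobian_x\statistics)^\top\hessian{\statistics(x)}\Psi(\statistics(x), z)\,(\jacobian_x\statistics),
\end{align*}
where the first sum matches $\statediffuse(x, z)$ by Definition~\ref{def:diffusivity:spatial} and the second matches $\statsdiffuse(x, z)$ by Definition~\ref{def:diffusivity:statistical}. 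Plugging $\hessian{x}\cdf{\returnmeasure^\pi} = \statediffuse + \statsdiffuse$ into the trace term of~\eqref{eq:dhjb:ito} reproduces the diffusion term of~\eqref{eq:shjb:loss}, so the bracket in~\eqref{eq:shjb:loss} is exactly the LHS of~\eqref{eq:dhjb:ito}, which vanishes, and its square is zero as claimed. The main subtlety lies in justifying this Hessian expansion: the $C^2$ regularity of $\statistics$ together with the statistical smoothness of $\Phi$ (so that $\hessian{\statistics}\Psi$ exists in the appropriate sense) must be compatible with Assumption~\ref{ass:method:c2} on $\cdf{\returnmeasure^\pi}$, so that the decomposition above holds on the same almost-everywhere set on which~\eqref{eq:dhjb:ito} holds.
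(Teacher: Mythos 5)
Your proposal is correct and follows essentially the same route as the paper's proof: substitute $\cdf{\returnmeasure^\pi}(x,z) = \Psi(\statistics(x),z)$ into \eqref{eq:dhjb:ito}, apply the chain rule to the gradient term, and expand the spatial Hessian by the product rule into exactly the sum $\statediffuse(x,z) + \statsdiffuse(x,z)$, so the bracket in \eqref{eq:shjb:loss} is the vanishing left-hand side of \eqref{eq:dhjb:ito}. Your closing remark on reconciling the regularity of $\statistics$ and $\Phi$ with Assumption \ref{ass:method:c2} matches the paper's (brief) justification that all differential quantities exist almost everywhere.
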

We define a statistical HJB \emph{loss}, as opposed to a PDE, since by restricting the return distribution function to a class that can be imputed by a given set of statistical functionals, \eqref{eq:dhjb} will not generally have a solution. However, analysis of \eqref{eq:shjb:loss} can reveal a lower bound on the approximation error, which can be useful when designing DRL algorithms in practice. Corollary \ref{cor:shjb} will demonstrate this.

While \eqref{eq:shjb} looks daunting, for certain imputation
strategies it can be simplified drastically. Imputation strategies that construct quantile distributions are particularly well-behaved in this regard, however, they necessitate a weakened interpretation of differentiability in order to make sense of the statistical HJB equation. Recall that quantile distributions are finite convex combinations of Dirac measures, so their CDFs are finite convex combinations of Heaviside functions. While these functions are in fact differentiable almost everywhere, their derivatives are zero, so all information about the distribution is lost under differentiation. When the return distribution function is statistically smooth,
however, we can reason about solutions to distributional HJB equations in \emph{the distributional sense}. 
\marc{``Under our assumption'' -- not an assumption; it's a definition.}
\harley{Unsure if this should go in Setting. I think it is less likely to go unnoticed here, not sure how important that is.}
For the purpose of the following results, $\psi'$ is said to be a \emph{distributional derivative} of the tempered distribution $\psi:\R\to\R$ if for every smooth and rapidly-decaying function $\rho:\R\to\R$, we have
\begin{align*}
    \int_{\R}\rho(z)\psi'(z)dz = -\int_{\R}\rho'(z)\psi(z)dz
\end{align*}
Intuitively, a distributional solution to a differential equation is a mapping that satisfies the equation upon convolution with every ``reasonable" smoothing kernel. This concept is discussed with more rigor in Appendix \ref{app:distributions}.

\marc{Because this is specifically the \emph{quantile} HJB, it would be good to use a specific notation for said functional (rather than the generic $\hat s$).}
\begin{restatable}[The Quantile HJB Equation for Policy Evaluation]{corollary}{shjbcor}
  \label{cor:shjb}
  Let $\returnmeasure^\pi$ be statistically smooth, and let $\mathbf{s}$ be the sketch that maps $\returnmeasure(x)$ to a quantile distribution for each $x\in\xxX$.

  If $\statistics(x) = \mathbf{s}(\returnmeasure^\pi(x))$ for each $x\in\xxX$ and
  $\cdf{\returnmeasure^\pi}$ is a distributional solution to \eqref{eq:dhjb:ito}, then sketch $\statistics$ of the statistical functionals $\indexedint{k}{N}{s}$
  is a distributional solution to the following system of PDEs,
  \begin{equation}
    \label{eq:shjb:dirac}
    \ifconf{icml2022}{\small}
    \begin{aligned}
        \begin{cases}
              \langle\nabla_{x}\proj{k}\statistics(x),\xxf(x)\rangle + r(x) + \log\gamma\proj{k}\statistics(x)\\
              \qquad\qquad + \frac{1}{2} \trace\left(\quadraticform{\xxs(x)}{\hessian{x}\proj{k}\statistics(x)}\right) = 0\\
              \proj{k}\statistics(x) = s_k(\returnmeasure(x))\\
              \hspace{0.75cm} k = 1,\dots,N
        \end{cases}
    \end{aligned}
  \end{equation}
\end{restatable}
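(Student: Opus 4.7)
The plan is to substitute the quantile CDF representation into the It\^o-diffusion distributional HJB of Corollary \ref{cor:dhjb} and read off a PDE for each quantile by grouping the resulting Dirac deltas and invoking the sifting property.

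First I would write the CDF of the quantile distribution as $\cdf{\returnmeasure^\pi}(x, z) = \frac{1}{N}\sum_{k=1}^{N} H(z - \proj{k}\statistics(x))$, where $H$ is the Heaviside function, and compute its distributional derivatives. The $z$-derivative is $\partial_z \cdf{\returnmeasure^\pi}(x, z) = \frac{1}{N}\sum_k \delta(z - \proj{k}\statistics(x))$. The chain rule in $x$ gives $\nabla_x \cdf{\returnmeasure^\pi}(x, z) = -\frac{1}{N}\sum_k \delta(z - \proj{k}\statistics(x))\,\nabla_x \proj{k}\statistics(x)$, and $\hessian{x}\cdf{\returnmeasure^\pi}$ splits into a $\delta$-part weighted by $-\hessian{x}\proj{k}\statistics(x)$ (the spatial-diffusivity contribution) plus a $\delta'$-part weighted by $\nabla_x \proj{k}\statistics(x)\,(\nabla_x \proj{k}\statistics(x))^\top$ (the statistical-diffusivity contribution).

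Next I would substitute these expressions into \eqref{eq:dhjb:ito} and group terms by their supporting distribution. The distributional sifting identity $z\,\delta(z - \proj{k}\statistics(x)) = \proj{k}\statistics(x)\,\delta(z - \proj{k}\statistics(x))$ collapses the factor $(r(x) + z\log\gamma)$ to $r(x) + \proj{k}\statistics(x)\log\gamma$ in the coefficient of $\delta(z - \proj{k}\statistics(x))$. Pairing the resulting distributional equation with a smooth, compactly supported test function $\rho(z)$ that equals $1$ on a neighborhood of $\proj{k}\statistics(x)$ and vanishes near the other quantiles isolates the coefficient of $\delta(z - \proj{k}\statistics(x))$; setting it to zero and multiplying by $-N$ yields exactly the first equation of the stated system. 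The identity $\proj{k}\statistics(x) = s_k(\returnmeasure^\pi(x))$ is the hypothesis of the corollary.

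The main obstacle is the $\delta'$ contribution carried by the statistical diffusivity, which does not vanish pointwise. The chosen test function has $\rho'(\proj{k}\statistics(x)) = 0$, so pairing it against $\delta'(z - \proj{k}\statistics(x))$ gives zero and this piece contributes no residual constraint. This is precisely why the distributional sense is needed here: we extract one scalar PDE per quantile from the single distributional equation in $(x,z)$, rather than demanding that every coefficient of $\delta$ and $\delta'$ vanish separately, which, as noted before Corollary \ref{cor:shjb}, is generally infeasible under a finite statistical-functional parameterization. A subsidiary technicality is states where two quantiles coincide: these form a lower-dimensional set on which the test-function separation breaks down, but by Assumption \ref{ass:method:c2} the per-quantile PDE extends across it by continuity.
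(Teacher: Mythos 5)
Your proposal is correct and follows the same skeleton as the paper's proof: write the quantile CDF as a sum of Heaviside functions, substitute into \eqref{eq:dhjb:ito}, differentiate in the distributional sense so the Heavisides become Diracs, use the sifting identity to replace $z$ by $\proj{k}\statistics(x)$ in the $(r(x)+z\log\gamma)$ factor, and conclude that the coefficient of each $\dirac{\proj{k}\statistics(x)}$ must vanish, which is exactly the system \eqref{eq:shjb:dirac}. Where you genuinely diverge is the treatment of the $\delta'$ term produced by the second spatial derivative (the statistical diffusivity). The paper isolates this term as $(a)$, tests against a product $\varrho_\epsilon(x)\psi(z)$ with $\varrho_\epsilon$ a Gaussian mollifier centred at $\overline{x}$, integrates by parts in $x$ to move the derivative onto $\varrho_\epsilon$, and sends $\epsilon\to 0$; this is meant to kill the term for an \emph{arbitrary} $\psi$, but rests on the rather delicate bound $|\mathcal{M}_\epsilon|\leq\epsilon\sup_{x\in B}|\cdots|$. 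You instead choose $z$-test functions that are identically $1$ on a neighbourhood of $\proj{k}\statistics(x)$ and vanish near the other quantiles, so the pairing with $\delta'(z-\proj{k}\statistics(x))$ is exactly zero with no limiting argument. This is more elementary and sidesteps the delicate estimate, at the price of only extracting the Dirac coefficients rather than claiming the $\delta'$ contribution vanishes against all test functions --- which is all the corollary asserts, and which is consistent with the paper's own remark that forcing every coefficient to vanish is infeasible under a finite parameterization. Two points you should make explicit if you write this up: the test function must also be localized in $x$ so that a single flat plateau of $\rho$ covers $\proj{k}\statistics(x)$ for every $x$ in the spatial support (your ``neighborhood'' phrasing only gestures at this; the paper's $\varrho_\epsilon$ supplies it), and your closing treatment of coincident quantiles handles a degenerate case the paper's proof passes over silently.
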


Remarkably, this shows that the statistical diffusivity present in \eqref{eq:shjb}
vanishes under the quantile imputation strategy. The significance of this corollary
is twofold. Firstly, it demonstrates that under the quantile representation, 
distributional dynamic programming reduces to solving a system of HJB equations,
so existing HJB solving methods can be leveraged (such as that of 
\citet{munos1997reinforcement}) for continuous-time distributional RL. Moreover,
comparing \eqref{eq:shjb:dirac} and \eqref{eq:shjb:loss}, it is clear that
such a reduction \emph{is not possible} in general -- in particular, to adapt 
categorical or expectile TD-learning algorithms to the continuous-time setting, one
must take extra care to account for the spatial and statistical diffusivity due to
the corresponding imputation strategies.
%
\section{A Reinforcement Learning Algorithm}\label{s:algorithms}
We propose a model-based DRL algorithm for jointly learning the return distribution 
function and optimizing the policy. Our algorithm is akin to the Quantile Regression 
TD-Learning (QTD) algorithm \citep{Dabney2018DistributionalRL} with two important 
differences: \emph{(a)} we update return distributions according to the Quantile HJB 
equation \eqref{eq:shjb:dirac} as opposed to the distributional Bellman equation 
\citep{Bellemare2017ADP}, and \emph{(b)} we employ a differential updating scheme that 
converges in the limit of continuous time updates as opposed to a simple gradient descent.

In order to adapt a TD-learning algorithm like QTD to the continuous-time setting, we must 
note that TD updates may occur at arbitrarily high frequencies -- as such, return 
distributions can evolve continuously in time. We must ensure that our update scheme is 
well-defined in this limit.
To do so, we model a gradient \emph{flow} as opposed to a sequence of gradient updates. \citet{Jordan02thevariational} presents the \emph{JKO scheme} to accomplish this in the $2$-Wasserstein space when the loss functional has the form $\mathscr{F}(\returnmeasure)=\int_{\returnspace} Ud\returnmeasure - \frac{1}{\beta}\xxEntropy(\returnmeasure)$, where $\xxEntropy$ denotes entropy. We must derive the function $U$ such that the loss is minimized at the return distribution function. This is precisely what is done by \citep{martin2020stochastically} to minimize the distributional Bellman error.

In order to adapt the JKO scheme of \citet{martin2020stochastically}, we replace the distributional Bellman error with a signal that we refer to as the \emph{kinetic energy of returns}.
We use a quantile imputation strategy $\Phi$ to approximate return distribution functions, so return distributions can be interpreted as finite sets of $N$ return ``particles" each having equal mass. For a set of particles distributed by $\returnmeasure(x)\in\probset{\returnspace}$ for $x\in\xxX$, let $\Psi(x, z) = \Phi(\statistics(x), z) = \cdf{\returnmeasure}(x, z)$.
Denoting by $\xxL$ the infinitesimal generator of the conditional backward return process, we define the kinetic energy according to
\begin{align}
    U(z) &= \frac{1}{2}\left(\xxL\Psi(x, z)\right)^2\label{eq:kinetic-energy}
\end{align}
This results in the loss $\xxLoss_\beta:\probset{\returnspace}\to\R$ given by
\begin{equation}
  \label{eq:loss:beta}
  \xxLoss_\beta(\returnmeasure(x)) =
  \int_{\returnspace}\frac{1}{2}\left(\xxL\Psi(x, z)\right)^2\returnmeasure(x, dz)
  - \frac{1}{\beta}\xxEntropy(\returnmeasure)
\end{equation}
Under the quantile distribution representation, we see in \eqref{eq:shjb:dirac} that $\xxL\Psi(x,\cdot)$ is affine. Therefore, the kinetic energy is
convex, and then it is a well established result that $\xxLoss_\beta$
is convex \citep{ambrosio2008gradient}. Therefore, $\xxLoss_\beta$ has
a unique (global) minimum. We consider the gradient flow of
\eqref{eq:loss:beta}:
\begin{equation}
  \label{eq:gradient-flow}
  \returnmeasure_s(x) =
  -\nabla\xxLoss_\beta(\returnmeasure_s(x,\cdot))
\end{equation}
where $s$ is a continuous time parameter.\footnote{This is not
  necessarily equivalent to the time parameter in the MDP.}
Remarkably, \citet{Jordan02thevariational} shows that
\eqref{eq:gradient-flow} in the $2$-Wasserstein space is equivalent to
the \emph{Fokker-Planck equation}, which is a well-known PDE in
various scientific disciplines. As a result of this, it is well known
that \eqref{eq:loss:beta} is minimized when the density $\varrho$ of $\returnmeasure$ satisfies
$\returnmeasure\propto\exp(-\beta U)$.

Furthermore, \citet{martin2020stochastically} shows that as $\beta\to\infty$, the minimizer of $\xxLoss_\beta$ coincides with $U\equiv 0$. With $U$ given by \eqref{eq:kinetic-energy}, this occurs when $\returnmeasure(x)$ satisfies the Kolmogorov backward equation for $\xxL$. By Theorem \ref{thm:shjb}, we see that the loss is minimized by the return distribution function.

To construct a reinforcement learning algorithm, we must discretize the gradient flow \eqref{eq:gradient-flow} in time. The JKO scheme for \eqref{eq:gradient-flow} consists of computing the sequence of iterates $\sequence{k}{\xxMMSIterate}$ given by
\begin{equation}
  \label{eq:minimizing-movements}
  \xxMMSIterate_{k+1}\in\arg\min_{\returnmeasure}\bigg[2\tau\int_{\returnspace}Ud\returnmeasure(x)
  + W^{\beta}_2(\returnmeasure, \xxMMSIterate_{k})\bigg]
\end{equation}
where $W^\beta_2$ is the entropically-regularized $2$-Wasserstein distance \citep{cuturi2013sinkhorn} with inverse temperature $\beta$. Computation of this distance is tractable for quantile distributions via the \emph{Sinkhorn algorithm} \citep{cuturi2013sinkhorn,martin2020stochastically}. Remarkably, the Sinkhorn algorithm is differentiable \citep{peyre2019computational}, which allows us to incorporate it with gradient-based optimization schemes.

Under a continuous time interpolation of $\sequence{k}{\xxMMSIterate}$ given by \citet{Jordan02thevariational}, the interpolated curve converges to \eqref{eq:cauchy} as $\tau\to 0$ in \eqref{eq:minimizing-movements}.

\subsection{Control}
In continuous time, individual actions have negligible effects on the return, and consequently
the action-value function cannot be used to infer optimal actions \citep{baird1993advantage}.
This concept is formalized by \citet{bellemare2016increasing} and \citet{tallec2019making}.
To account for this, \emph{advantage-updating} \citep{baird1993advantage} and similar schemes 
\citep{bellemare2016increasing} introduce alternative notions of action values that are
meaningful in the continuous-time limit. However, to the best of our knowledge, such concepts
have not been studied in a distributional framework. Since the theory that was presented in 
this paper is concerned only with policy evaluation, such developments are out of scope, but
are certainly interesting avenues for future work. 

In order to perform simulations, we must discretize time. When time is discretized, individual 
actions are no longer negligible\footnote{Note that, while individual actions may have discernible influence on the return in this setting, their influence is still small. Consequently, due to noise in the training process, convergence to an optimal policy can be quite slow as reported by \citet{baird1993advantage}, so it would still behoove us to study a distributional analogue to advantage updating even in the time-discretized setting.}, so state-action pairs will not be completely invariant to the action. 

We associate $|\xxA|$ return distributions to each state (one per action), and henceforth we use the notation $\returnmeasure^\pi(x, a)$ to denote the return distribution associated to the policy $\pi$ corresponding to the state-action pair $(x,a)$. Likewise, statistics functions are indexed by actions, so we now write $\Phi(\statistics(x, a), z) = \returnmeasure(x, a)$.

In order to infer an optimal policy given a return distribution function, we must impose an ordering among return distributions. We simply order return distributions by their expected values, akin to many common DRL algorithms \citep{Bellemare2017ADP, Dabney2018DistributionalRL}. Subsequently, we deem a policy $\pi^\star$ optimal if, for every state-action pair $(x, a)$, $\returnmeasure^{\pi^\star}(x, a)$ has greater expectation than $\returnmeasure^\pi(x, a)$ for any other policy $\pi$.

\subsection{Approximating Solutions to the DHJB Equation}
\newcommand{\fddrift}{\widehat{\mu}}
\newcommand{\fdsig}{\widehat{\pmb{\Sigma}}}
In order to maintain estimates of the return distribution function at each state-action pair, we must discretize the state space to a finite collection of points. Consequently, we must derive approximations of the differential terms in \eqref{eq:shjb:dirac}. We will write $f^\pm(x) = \max(\pm f(x), 0)$, and we will approximate the drift and variance of the dynamics by $\fddrift:\xxX_\epsilon\times\xxA\to\xxX\approx\xxf$ and $\fdsig:\xxX_\epsilon\times\xxA\to\R^{d\times d}\approx\xxs\xxs^\top$.

Fortunately, \eqref{eq:shjb:dirac} has a very special form: it is simply a system of HJB equations. Due to the prevalence of HJB equations in control and continuous-time RL research, there are myriad established methods for solving them. We will make use of the finite-differences scheme that was introduced by \citet{munos1997reinforcement}, which approximates solutions to HJB equations driven by \Ito\ diffusions.

For some $\epsilon>0$, we discretize $\xxX$ to a lattice $\xxX_\epsilon = \{\sum_{n=1}^di_n\epsilon\vec{e}_n : i_n\in\Z\}\cap\xxX$ where $\indexedint{i}{d}{\vec{e}}$ is the standard basis of $\R^d$.
The \emph{neighbors} of each state $\xi\in\xxX_\epsilon$ are points adjacent to $\xi$ in the lattice. The mapping $\fdneighbor:\xxX_\epsilon\to 2^{\xxX_\epsilon}$ maps each state to the set of its neighbors, given as follows,
\begin{align*}
    \fdneighbor(\xi) = \{\xi'\in\xxX_\epsilon\setminus\{\xi\} :\ &\xi' = \xi + a\epsilon\vec{e}_i + b\epsilon\vec{e}_j,
    \ifconfelse{icml2022}{\\&}{\ }
    i,j\in [d],\ i\neq j,
    \ifconfelse{icml2022}{\\&}{\ }
    a,b\in\{0,\pm 1\}\}
\end{align*}
Since the state space is divided into a finite collection of ``cells", all states within a given cell are indistinguishable from one another in our approximation. As such, at any given cell, even if the dynamics are deterministic, the agent's future states are randomly distributed. This phenomenon is depicted in Figure \ref{fig:lattice}.
\begin{figure}[h]
\centering
\includegraphics[scale=0.8]{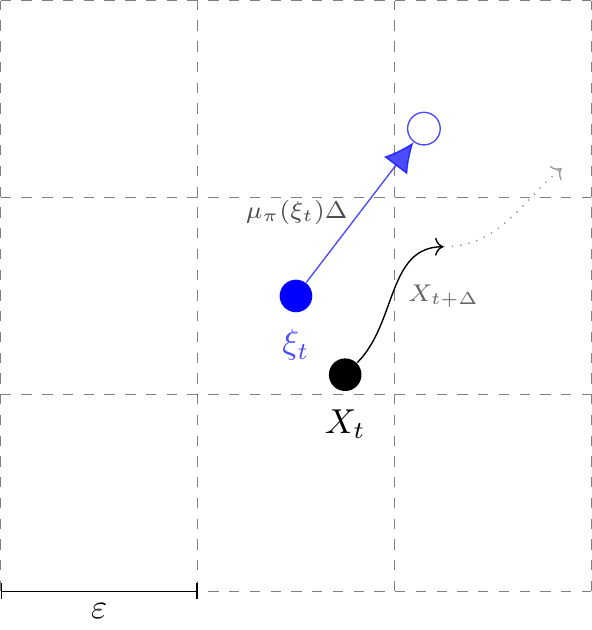}
\caption{Finite-differences approximate trajectory (blue) relative to real trajectory (black).}\label{fig:lattice}
\end{figure}
Suppose the agent has velocity $\vec{v}$ with speed $v$. For the timestep $\Delta$ for which $\Delta v = 1\cdot\epsilon$, the components of $\vec{v}$ can be interpreted as probabilities $p\epsilon$ as shown in Figure \ref{fig:lattice}. When the dynamics are driven by an \Ito\ diffusion, \citet{munos1997reinforcement} shows that this timestep is given by
\begin{align*}
\Delta_{\xi,a} &= \frac{\epsilon^{2}}{\epsilon\norm[1]{\fddrift(\xi,a)} + \trace(\fdsig(\xi,a)) - \frac{1}{2}\sum_{j\neq i}|\fdsig(\xi,a)_{ij}|}\\
\end{align*}
Subsequently, the transition probabilities are given by
\begin{align*}
\ifconf{icml2022}{&}p(\xi, a, \xi\pm\epsilon\vec{e}_i)
\ifconfelse{icml2022}{=\\}{&=}
\ifconf{icml2022}{&\quad}\frac{\Delta_{\xi,a}}{2\epsilon^2}\left[2|(\proj{i}\fddrift)^\pm(\xi,a)| + \fdsig(\xi,a)_{ii} - \sum_{j\neq i}|\fdsig(\xi,a)_{ij}|\right]\\
\ifconf{icml2022}{&}p(\xi, a, \xi+\epsilon(\vec{e}_i\pm\vec{e}_j)) 
\ifconf{neurips2022}{&}=
\frac{\Delta_{\xi,a}}{2\epsilon^2}\fdsig^{\pm}(\xi,a)_{ij}\quad i\neq j\\
\ifconf{icml2022}{&}p(\xi, a, \xi-\epsilon(\vec{e}_i\pm\vec{e}_j))
\ifconf{neurips2022}{&}=
\frac{\Delta_{\xi,a}}{2\epsilon^2}\fdsig^{\pm}(\xi, a)_{ij}\quad i\neq j\\
\ifconf{icml2022}{&}p(\xi, a, \xi') \ifconf{neurips2022}{&}= 0\quad\text{otherwise}
\end{align*}
We define the finite differences distributional Bellman operator by $\mathcal{T}_{\Delta}$ where
\newcommand{\boot}[3][\Delta_{\xi,a}]{\mathsf{b}_{#2, #3}^{#1}}
\newcommand{\bootr}[2][r]{\boot{#1}{#2}}
\newcommand{\bootry}[1][\gamma]{\bootr{#1}}
\begin{equation}\label{eq:fd}%
\small
    \begin{aligned}
        &\pi^\star_\xi\leftarrow\underset{a'\in\xxA}{\argmax}\Expect{\Phi(\xi, a')}\qquad\forall \xi\in\xxX_\epsilon\\
        &\bootry:\returnspace\to\returnspace:z\mapsto \Delta_{\xi,a}r + \gamma^{\Delta_{\xi,a}}z\\
        &\qquad\qquad\qquad\qquad i\neq j\in[d],\ a,b\in\{0,\pm 1\}\}\\
        &\mathcal{T}_\Delta\Phi(\xi, a) \leftarrow \sum_{\xi'\in\fdneighbor(\xi)}p(\xi, a, \xi')\left(\bootry\right)_\sharp\Phi(\xi', \pi^\star_{\xi'})\\
    \end{aligned}
\end{equation}%
where $\sharp$ denotes the pushforward operation, defined by $f_\sharp\mu = \mu\circ f^{-1}$ for a measure $\mu$ and a measurable function $f$.
Finally, the finite differences approximation of \eqref{eq:shjb:dirac} is the fixed point equation
\begin{equation}\label{eq:shjb:fd}
\mathcal{T}_{\Delta}\Phi(\xi, a) = \Phi(\xi, a)\qquad\xi\in\xxX_\epsilon,\ a\in\xxA
\end{equation}
We derive an algorithm based on these principles as an iterative method for solving 
\eqref{eq:shjb:fd}. Notably, with the quantile representation, our algorithm is tractable
relative to a HJB-solving oracle, such as the algorithm proposed by \citet{munos1997convergent}. 
The learning update is summarized in Algorithm 
\ref{alg:fd}, which can be applied in both online and offline settings. When the dynamics 
$\xxf,\xxs$ are unknown, which is usually the case in reinforcement learning, they can be
estimated by the sample mean and sample covariance of observed transitions, respectively
\citep{munos1997reinforcement}. The algorithm has access to a mapping
$\mathsf{Enc}:\xxX\to\xxX_\epsilon$ which maps states to their closest point in the 
lattice $\xxX_\epsilon$. For the purpose of exploration, we simply employ a 
$\epsilon$-greedy policy.

\begin{algorithm}[h]
  \begin{algorithmic}
    \REQUIRE{WGF time parameter $\tau$}
    \REQUIRE{Learning rate $\alpha$}
    \REQUIRE{State transition $(x, a, r, x')$}
    \REQUIRE{Duration of transition $\delta$}
    \STATE{$\xi\leftarrow\mathsf{Enc}(x)$}
    \\\COMMENT{Update model}
    \STATE{$\fddrift(\xi,a)\leftarrow(1-\alpha)\fddrift(\xi,a) + \alpha(x'-x)/\delta$}
    \STATE{$\sigma\leftarrow x' - x - \Delta\fddrift(\xi)$}
    \STATE{$\fdsig(\xi,a)\leftarrow(1-\alpha)\fdsig(\xi,a) + \alpha\Delta^{-1}\sigma\sigma^\top$}
    \\\COMMENT{Compute mixture of target quantiles}
    \FOR{$y\in\fdneighbor(\xi)$}
        \STATE{$(\mathbf{T}_{\Delta_{\xi,a}})_{y}\leftarrow\Delta_{\xi,a}r + \gamma^{\Delta_{\xi,a}}\statistics(y, \pi^\star_y)$}
        \STATE{$\mathbf{p}_y\leftarrow p(\xi, a, y)$}
    \ENDFOR
    \STATE{$\widehat{\returnmeasure}\leftarrow\frac{1}{N}\sum_{y\in\fdneighbor(\xi)}\mathbf{p}_y\sum_{k=1}^N\dirac{(\mathbf{T}_{\Delta_{\xi,a}})_{y,k}}$}
    \\\COMMENT{Update quantiles}
    \STATE{$\returnmeasure_0\leftarrow\frac{1}{N}\sum_{k=1}^N\dirac{\statistics(\xi, a)_k}$}
    \STATE{$\returnmeasure\leftarrow\underset{\nu\in\probset{\returnspace}}{\argmin}\ \left[2\tau\Expectation{Z'\sim\widehat{\returnmeasure},Z\sim\nu}{(Z-Z')^2} + W_2^\beta(\nu,\returnmeasure_0)\right]$}
    \STATE{$\statistics(\xi, a)\leftarrow\mathbf{s}(\returnmeasure)$}\COMMENT{Extract quantiles of return distribution}
  \end{algorithmic}
  \caption{Continuous-time distributional RL update}\label{alg:fd}
\end{algorithm}
\section{A Qualitative Demonstration}\label{s:experiments}
We simulate the performance of the FD-WGF
$Q$-learning algorithm on a simple task based on a continuous MDP suggested by
\citet{Munos2004ASO} as an example of an MDP whose value function does not satisfy the HJB equation in the usual sense. In
this environment, we control a particle on $\xxX = [0,1]$ with actions
among $\xxA = \{-1,1\}$. The dynamics of the particle are given by
$\dot{x}(t) = a(t)$.

Rewards are zero in the interior of $\xxX$, and are otherwise sampled
from $\mathcal{N}(2, 2)$ and $\mathcal{N}(1, 1)$ at states $1, 0$
respectively. The discount factor is $\gamma = 0.3$, and observations
occur at a frequency $\omega=1\text{kHz}$. We observe the performance
of FD-WGF $Q$-learning relative to the Quantile Regression TD-learning
algorithm (QTD) proposed by
\citet{Dabney2018DistributionalRL}. Figure
\ref{fig:birdseye} depicts an overview of the return distribution
functions learned by both algorithms.
\begin{figure}[h!]
  \centering
  \ifconfelse{icml2022}{%
  \newcommand{\dtscale}{0.226}
  \newcommand{\ctscale}{0.339}
  QTD\\
  \includegraphics[scale=\dtscale, trim={0 0 0 2cm},
  clip]{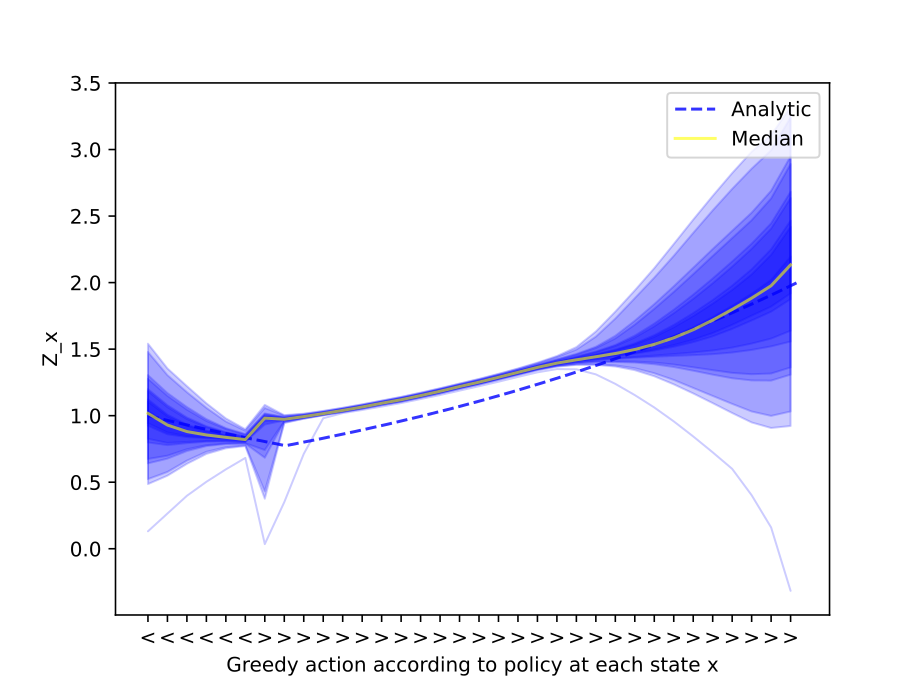}\\
  FD-WGF $Q$-learning\\
  \includegraphics[scale=\ctscale, trim={0 0 0 1.5cm}, clip]{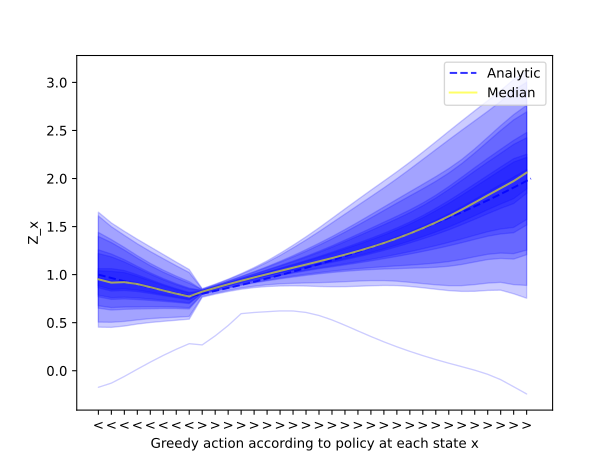}%
  }{%
  \newcommand{\dtscale}{0.226}
  \newcommand{\ctscale}{0.339}
  \begin{minipage}{0.49\textwidth}
  \centering
  QTD\\
  \includegraphics[scale=\dtscale, trim={0 0 0 2cm},
  clip]{results/dt51-munos-overview}\\
  \end{minipage}
  \begin{minipage}{0.49\textwidth}
  \centering
  FD-WGF $Q$-learning\\
  \includegraphics[scale=\ctscale, trim={0 0 0 1.5cm}, clip]{results/ct51-munos-overview-0}%
  \end{minipage}
  }
  \caption{Return distribution functions and policies learned by
    FD-WGF $Q$-learning and QTD}
  \label{fig:birdseye}
\end{figure}
In Figure \ref{fig:birdseye}, the darker blue regions represent larger
probability mass of the return distribution. The dashed blue line is
the analytical value function. We observe that our proposed algorithm
learns a good representation of the value function, whereas the QTD
algorithm tends to fail near the point of non-differentiability of the
value function. Consequently, we see that QTD overestimates the value
function over much of the state space. Figure \ref{fig:scan} shows the
return distributions learned by each algorithm near the boundaries of
the state space, where the return variance is greatest.

We observe that FD-WGF $Q$-learning represents the true return
distribution far more accurately near $\partial \xxX$, while both
algorithms tend to ``lose'' variance further away from the
boundaries. That said, especially when $x>0.5$, we see that QTD has
substantially more difficulty learning the variance of the return
distributions than FD-WGF $Q$-learning.
\section{Conclusion}\label{s:conclusion}
Our work demonstrates that extra care should be taken when
designing distributional RL algorithms for continuous-time
problems. Notably, we have shown that the approximation of return
distributions as empirical distributions is particularly well suited
to continuous-time problems, as these representations eliminate the
\emph{statistical diffusivity} of the return due to the stochasticity
of the system. Through our simulated experiments, we confirmed the
hypothesis that accounting for continuous time aids DRL algorithms to
preserve the return distribution entropy.

The algorithm presented in this work, as a finite-differences based scheme, becomes intractable as the dimension $d$ of the state space grows. However, we note that function approximation can be integrated without much difficulty to account for these cases. Since the loss function is differentiable, we can envision an algorithm similar to Algorithm \ref{alg:fd} with $\statistics$, $\xxf$, and $\xxs$ parameterized by neural networks, with the gradient and Hessian of $\statistics$ computed via automatic differentiation and parameters trained via gradient descent. This algorithm would be similar to \emph{Online WGF Fitted $Q$-iteration} \citep{martin2020stochastically}, which demonstrates promising results. Such extensions are left for future work.

\section*{Acknowledgements}
The authors thank Julie Alhosh, Matthieu Geist, and Mark Rowland for their detailed and constructive feedback on previous drafts, as well as the anonymous reviewers whose insights helped improve the paper.
This work was supported by the National Sciences and Engineering Research 
Council (NSERC) Canadian Robotics Network (NCRN), as well as the Canada CIFAR 
AI Chair program.
Harley Wiltzer's research was supported by a NSERC CGS-M grant.
\bibliography{sources}
\bibliographystyle{icml2022/icml2022}

\newpage
\appendix
\onecolumn
\section{Proofs}\label{app:proofs}
\subsection{Proofs of Results in \S\ref{s:dhjb}}\label{app:proofs:dhjb}
\begin{proof}[Proof of Proposition \ref{pro:markov}]
\marc{Consider defining the $C$ classes.}
  Let $\psi\in C(\mathcal{X}\times\mathcal{R};\mathbf{R})$ and $h>0$. As usual,
  we denote the canonical filtration by $(\mathcal{F}_t)_{t\geq 0}$.
  By the definition of the truncated return process,

  \begin{equation*}
    \small
    \begin{aligned}
      \ConditionExpect{\psi(J_{t+h})}{\mathcal{F}_t} &=
      \ConditionExpect{\psi(X_{t+h}, \overline{G}_{t+h})}{\mathcal{F}_t}\\
      &= \ConditionExpect{\psi\left(X_{t+h}, \overline{G}_t + \int_t^{t+h}\gamma^sr(X_s)ds\right)}{\mathcal{F}_t}\\
      &= \ConditionExpect{\psi\left(X_{t+h}, \overline{G}_t + \int_t^{t+h}\gamma^sr(X_s)ds\right)}{J_t}\\
    \end{aligned}
  \end{equation*}

  where the final step holds since the process $(X_t)_{t\geq 0}$ is
  assumed to be Markovian. Thus, we've shown that for any $\psi\in
  C(\mathcal{X}\times\mathcal{R};\mathbf{R})$, there exists a function
  $m:\mathcal{X}\times\mathcal{R}\to\mathbf{R}$ where

  \begin{equation*}
    \ConditionExpect{\psi(J_{t+h})}{\mathcal{F}_t} = m(X_t, \overline{G}_t)
  \end{equation*}

  Therefore, the process $(J_t)_{t\geq 0}$ is Markovian.
\end{proof}

\begin{restatable}{lemma}{dhjbfv}\label{lem:finite-variation}
  Let $\indexedabove{t}{\xxJointFD}=(X_t, \xxTruncRet_t)_{t\geq 0}$ be
  the \hyperref[def:truncated-return]{truncated return process} defined
  in Theorem \ref{thm:dhjb}.\marc{Def 5?}\harley{I reordered some things before I saw this, and I added the definition of FV process, is it clear now?} Then $\indexedabove{t}{\xxTruncRet}$ is
  a finite variation process.
\end{restatable}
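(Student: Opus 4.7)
The plan is to observe that $\xxTruncRet_t = \int_0^t \gamma^s r(X_s)\, ds$ is defined pathwise as an ordinary Lebesgue integral of a bounded, measurable function of time, and that any such integral yields an absolutely continuous (indeed Lipschitz) function of its upper limit, which is automatically of finite variation on every compact interval.

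Concretely, I would first fix $\omega \in \xxPSpaceSample$ and note that since $r$ is bounded (say by $\|r\|_\infty < \infty$) and $\gamma \in (0,1)$ so $\gamma^s \le 1$ for $s \ge 0$, the integrand $s \mapsto \gamma^s r(X_s(\omega))$ is bounded in absolute value by $\|r\|_\infty$ on $\R_+$. Hence, for any $0 \le s \le t$,
\begin{equation*}
  |\xxTruncRet_t(\omega) - \xxTruncRet_s(\omega)|
  = \left| \int_s^t \gamma^u r(X_u(\omega))\, du \right|
  \le \|r\|_\infty (t - s),
\end{equation*}
so each sample path $t \mapsto \xxTruncRet_t(\omega)$ is globally Lipschitz with constant $\|r\|_\infty$.

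From this Lipschitz bound, the total variation over any compact interval $[0, T]$ is immediately controlled: for any partition $0 = t_0 < t_1 < \cdots < t_n = T$,
\begin{equation*}
  \sum_{i=1}^n |\xxTruncRet_{t_i}(\omega) - \xxTruncRet_{t_{i-1}}(\omega)|
  \le \|r\|_\infty \sum_{i=1}^n (t_i - t_{i-1}) = \|r\|_\infty T,
\end{equation*}
so the supremum over all partitions is bounded by $\|r\|_\infty T < \infty$. Since this holds for every $\omega$ and every $T > 0$, the process $\indexedabove{t}{\xxTruncRet}$ is a finite variation process.

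There is no real obstacle here; the only subtlety is flagging that the argument is entirely pathwise (no stochastic integration is invoked) and that it relies only on the assumed boundedness of $r$. This pathwise finite variation property is exactly what is needed downstream to avoid an \Ito\ correction for $\xxTruncRet_t$ when computing the generator of the joint process $\indexedabove{t}{\xxJointFD}$.
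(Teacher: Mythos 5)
Your argument is correct, and it rests on the same observation as the paper's proof: each sample path of $\xxTruncRet$ is a Lebesgue integral of a bounded integrand, so finite variation is immediate. The difference is in how the two proofs certify finite variation. The paper's Appendix defines a finite variation function as one of the form $a(t)=\mu([0,t])$ for a signed measure $\mu$, and its proof accordingly exhibits that measure explicitly, namely $\mu_\omega(A)=\int_A \gamma^{s\wedge T(\omega)} r(X_{s\wedge T(\omega)}(\omega))\,\Lambda(ds)$. You instead prove the classical partition-based property: a Lipschitz bound with constant $\|r\|_\infty$, hence total variation at most $\|r\|_\infty T$ on $[0,T]$. For continuous paths these two notions coincide, and in fact your own setup already supplies the signed measure the paper's definition asks for -- it is the measure with density $s\mapsto\gamma^s r(X_s(\omega))$ with respect to Lebesgue measure -- so the discrepancy is cosmetic; if you want your proof to plug directly into the paper's definition, add the one line identifying that measure. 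Your version buys a quantitative bound (global Lipschitz continuity of the paths), which is slightly stronger than what the lemma states; the paper's version stays closer to the measure-theoretic definition it later leans on when computing brackets. Two minor points: you reuse $T$ both for the random exit time and for the endpoint of the compact interval, which is worth disambiguating, and the paper's proof stops the integrand at the exit time (reflecting that rewards cease after termination), a detail that does not affect the finite variation conclusion but is worth mirroring for consistency with how $\xxTruncRet$ is used downstream.
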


In order to determine the infinitesimal generator of the truncated return process, it will be necessary to estimate its quadratic variation and the bracket $([X,\xxTruncRet]_t)_{t\geq 0}$. Establishing $\indexedabove{t}{\xxTruncRet}$ as a finite variation process will greatly simplify this estimate.

\begin{proof}[Proof of Lemma \ref{lem:finite-variation}]
  By definition, we have

  \begin{equation*}
    \overline{G}_t = \int_0^t\gamma^sr(X_s)ds
  \end{equation*}

  Consider the measurable space $(\mathbf{R}_+, \Sigma)$ where
  $\Sigma$ is the $\sigma$-algebra of Lebesgue-measurable subsets of
  the nonnegative reals, and let $\Lambda$ denote the Lebesgue
  measure. We will use $(\mathbf{R}_+,\Sigma)$ to measure \emph{time}. 
  By the Radon-Nikodym theorem, for each sample path $\omega\in\Omega$, the function
  $\mu_\omega:\Sigma\to\mathbf{R}$ shown below is a signed measure on this
  measurable space,

  \begin{equation*}
    \mu_\omega(A) = \int_A\gamma^{s\land T(\omega)}r(X_{s\land T(\omega)}(\omega))\Lambda(ds)\qquad A\in\Sigma
  \end{equation*}

  Then, for any $\omega\in\Omega$, the mapping $t\mapsto G_t(\omega) =
  \mu_\omega([0,t])$. This shows that each sample path is a function
  $a:t\mapsto\mu_\omega([0,t])$ for the measure $\mu_\omega$, so every sample path
  is a finite variation function by definition.\marc{Expand on this proof -- as is it's a bit of a sleigh of hand.}\harley{Is this proof less mysterious given the appendix on FV processes?}
\end{proof}

\begin{restatable}{lemma}{dhjbgen}\label{lem:generator}
  The truncated return process $\indexedabove{t}{\xxJointFD}$ as defined in
  Theorem \ref{thm:dhjb} is a \hyperref[def:fd]{Feller-Dynkin process}.
\end{restatable}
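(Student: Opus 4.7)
My plan is to directly verify the three defining properties of a Feller-Dynkin process for $J_t = (t, X_t, \overline{G}_t)$ on the state space $\mathbf{R}_+ \times \xxX \times \returnspace$ (which is locally compact Hausdorff since $\xxX$ is compact, $\returnspace$ is a compact interval, and $\mathbf{R}_+$ is locally compact). Markovianity has already been established in Proposition \ref{pro:markov}, so what remains is: (i) càdlàg sample paths, (ii) the Feller property, i.e.\ the transition semigroup preserves $C_0$ of the joint state space, and (iii) strong continuity of the semigroup at zero.

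For (i), the time coordinate is trivially continuous, the state process $(X_t)_{t\geq 0}$ is càdlàg since it is Feller-Dynkin by hypothesis, and $\overline{G}_t$ is in fact Lipschitz in $t$: writing $R = \|r\|_\infty$ (which is finite by the boundedness of $r$), we have $|\overline{G}_{t+h} - \overline{G}_t| \leq Rh$ uniformly in $\omega$. Therefore $J_t$ inherits càdlàg sample paths.

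For (ii) and (iii), I would express the joint transition semigroup using the Markov property as
\begin{equation*}
P_h^J \psi(t, x, z) = \mathbf{E}_x\!\left[\psi\!\left(t+h,\; X_h,\; z + \int_0^h \gamma^{t+s} r(X_s)\,ds\right)\right]
\end{equation*}
for $\psi \in C_0(\mathbf{R}_+ \times \xxX \times \returnspace)$, where $\mathbf{E}_x$ denotes expectation under the law of $X$ started at $x$. Continuity of this expression in $x$ follows from the Feller property of $(X_t)$ combined with bounded convergence (using that $\psi$ is bounded); joint continuity in $(t, z)$ follows from uniform continuity of $\psi$ on compacts together with the continuity of $s \mapsto \gamma^{t+s}$ in $t$, uniformly on $s \in [0,h]$. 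Decay as $t \to \infty$ is inherited directly from $\psi$, since the semigroup acts on the time coordinate as a deterministic shift by $h$. For strong continuity at zero, I would bound $|\psi(t+h, X_h, z + \int_0^h \gamma^{t+s} r(X_s)\,ds) - \psi(t, x, z)|$ using uniform continuity of $\psi \in C_0$, the estimate $|\int_0^h \gamma^{t+s} r(X_s)\,ds| \leq Rh$, and the right-continuity of $X_s$ at $s=0$ under $\mathbf{E}_x$ (a consequence of the Feller property of $X$), yielding $\|P_h^J \psi - \psi\|_\infty \to 0$ as $h \to 0$.

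The main obstacle I anticipate is controlling the joint continuity of $P_h^J\psi$ in all three arguments simultaneously, together with the decay at infinity in the non-compact time coordinate. The crucial structural observation that unlocks the proof is that the non-$X$ coordinates of $J_t$ are \emph{deterministic functionals of the sample path of $X$}: the time coordinate is a pure shift, and the return coordinate is a Lipschitz-continuous functional of the path via a bounded Riemann integral. This decoupling is exactly what lets us transfer each of the Feller-Dynkin properties from the underlying Feller-Dynkin process $(X_t)$ to the augmented process $(J_t)$, without having to reestablish them from scratch.
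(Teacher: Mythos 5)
Your proposal is correct and follows essentially the same route as the paper: direct verification of the Feller--Dynkin properties for the augmented process, exploiting the fact that the time and return coordinates are deterministic (indeed Lipschitz) functionals of the sample path of $X$, with the Feller property and strong continuity transferred from $(X_t)_{t\geq 0}$ via bounded/dominated convergence. If anything, your treatment of decay at infinity in the non-compact time coordinate (as a deterministic shift inherited from $\psi$) is more careful than the paper's, which asserts compact support somewhat loosely.
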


\begin{proof}
  Consider the \hyperref[def:filtration]{filtered probability space}
  $\mathsf{P} = (\Omega, \mathcal{F},
  \indexedabove{t}{\mathcal{F}}, \Pr)$ defined
  previously. Proposition
  \ref{pro:markov} shows that $\indexedabove{t}{J}$ is a Markov process. It remains to
  show that it is a Feller-Dynkin process. First, we must show that
  its transition semigroup maps $\indexedabove{t}{P}$ are
  endomorphisms on $C_0(\R_+\times\mathcal{X}\times\returnspace)$. Let
  $\psi\in C_0(\R_+\times\mathcal{X}\times\returnspace)$.

  Note that since $\indexedabove{t}{X}$ has continuous sample paths,\marc{Why?}
  $\indexedabove{t}{\overline{G}}$ has absolutely continuous sample
  paths since\marc{Do we need assumption on $r$?}

  \begin{equation*}
    \overline{G}_t(\omega) = \int_0^t\gamma^sr(X_s(\omega))ds \qquad
    \omega\in\Omega
  \end{equation*}

  so it is bounded by the integral of a bounded function. Therefore
  $P_\delta\psi$ can be expressed as

  \begin{align*}
    P_\delta\psi &= \int \psi\circ (t+\delta, X_{t+\delta}, \overline{G}_{t+\delta})d\Pr
  \end{align*}

  Since the sample paths $X_{t+\delta},\overline{G}_{t+\delta}$ are
  continuous, the integrand above is a continuous
  function. Additionally, since $\psi,\mathcal{X},\returnspace$ are
  all compactly supported, we see that $P_\delta\psi$ is as
  well. Therefore $P_\delta\psi\in C_0(\R_+\times\mathcal{X}\times\returnspace)$.

  It is easy to check that $P_0\psi=\identity$. This follows simply
  from the fact that $\indexedabove{t}{X}$ is a Feller-Dynkin process
  (so its semigroup has an identity) and
  $\indexedabove{t}{\overline{G}}$ is deterministic given
  $\indexedabove{t}{X}$. For the same reason, it follows that $P_tP_s=P_{t+s}$.

  It remains to show that $\norm[\infty]{P_\delta\psi - P_0\psi}\overset{\delta\downarrow
    0}{\longrightarrow} 0$. We have

  \begin{align*}
    \norm[\infty]{P_\delta\psi - P_0\psi} &= \norm[\infty]{P_\delta\psi - \psi}\\
    &= \norm[\infty]{\int_{\xxX\times\returnspace}\left(\psi\circ(t+\delta,X_{t+\delta},
      \overline{G}_{t+\delta}) - \psi(t, X_t, 
      \overline{G}_t)\right)d\xxPSpaceMeasure}\\
    &= \norm[\infty]{\int_{\xxX\times\returnspace}\psi\circ(t+\delta, X_{t+\delta},
      \overline{G}_{t+\delta})d\xxPSpaceMeasure - \psi(t, X_t, \overline{G}_t)}\\
  \end{align*}

  Since $\psi$ is supported on a compact finite-dimensional set\marc{Make sure this is true} and it
  is continuous, it follows that it is bounded. Therefore, it follows
  by the dominated convergence theorem that

  \begin{align*}
    \lim_{\delta\to 0}\int\psi\circ(t+\delta, X_{t+\delta},
    \overline{G}_{t+\delta})d\Pr &= \int\psi\circ\lim_{\delta\to
                                   0}(t+\delta,X_{t+\delta}, 
                                   \overline{G}_{t+\delta})d\Pr\\
                                 &= \int\psi(t, X_t, \overline{G}_t)d\Pr\\
    &= \psi(t, X_t, \overline{G}_t)
  \end{align*}
  This proves the claim.
\end{proof}

\begin{restatable}{lemma}{dhjbcorgenerator}\label{cor:generator}
  The truncated return process $\indexedabove{t}{J}$ defined in
  Theorem \ref{thm:dhjb} has an infinitesimal generator
  $\mathscr{L}:C_0(\R_+\times\mathcal{X}\times\returnspace)\to
  C_0(\R_+\times\mathcal{X}\times\returnspace)$
  given by
  \begin{equation}
    \label{eq:truncated-return:generator}
    \mathscr{L}\psi(t, x, \overline{g}) = (\mathscr{L}_X\psi(t, \cdot, \overline{g}))(x) +
    \gamma^tr(x)\partialderiv{}{\overline{g}}\psi(t, x, \overline{g}) +
    \partialderiv{}{t}\psi(t, x, \overline{g})
  \end{equation}
  where $\mathscr{L}_X$ is the infinitesimal generator of the process
  $\indexedabove{t}{\proj{2}J} = \indexedabove{t}{X}$.
\end{restatable}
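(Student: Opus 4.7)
The plan is to compute the infinitesimal generator directly from its definition $\mathscr{L}\psi(j) = \lim_{h \downarrow 0} h^{-1}\mathbf{E}[\psi(J_{t+h}) - \psi(J_t) \mid J_t = j]$, leveraging the additive structure of $J_t = (t, X_t, \overline{G}_t)$ to decompose the contribution of each component. Because $\indexedabove{t}{X}$ is a Feller-Dynkin process with generator $\mathscr{L}_X$ and $\indexedabove{t}{\overline{G}}$ is a finite variation process by Lemma~\ref{lem:finite-variation}, the two components can be treated essentially independently up to terms of order $o(h)$.

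First, I would expand the increment $\psi(J_{t+h}) - \psi(J_t)$ by chaining through each coordinate. For a test function $\psi$ in the domain of the generator (so sufficiently smooth, compactly supported on $\mathbf{R}_+ \times \mathcal{X} \times \mathcal{R}$), I would apply the generalized Itô formula, which gives
\begin{equation*}
  d\psi(J_t) = \partial_t \psi \, dt + \partial_{\overline{g}}\psi \, d\overline{G}_t + (\text{martingale and drift terms in } X) + \tfrac{1}{2} \text{(quadratic variation terms)}.
\end{equation*}
The crucial simplification here is that since $\indexedabove{t}{\overline{G}}$ is a finite variation process, both $[\overline{G}]_t$ and the cross variation $[X, \overline{G}]_t$ vanish identically, so the only second-order contribution comes from $[X]_t$, which is precisely captured by $\mathscr{L}_X$ acting on the $x$-slice $\psi(t, \cdot, \overline{g})$.

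Second, I would evaluate each first-order term. The time component gives $\partial_t \psi(t, x, \overline{g}) \cdot h + o(h)$. The $\overline{G}$ component, by Definition~\ref{def:truncated-return}, satisfies $d\overline{G}_t = \gamma^t r(X_t) dt$, yielding $\gamma^t r(x) \partial_{\overline{g}}\psi(t, x, \overline{g}) \cdot h + o(h)$ after conditioning on $J_t = (t, x, \overline{g})$ and using continuity of $\gamma^s r(X_s)$ in $s$ at $s = t$. The $X$ component, upon taking conditional expectations, produces $(\mathscr{L}_X \psi(t, \cdot, \overline{g}))(x) \cdot h + o(h)$ by the definition of the generator of the $X$-subprocess, as established in Lemma~\ref{lem:generator} (which shows $J$ is Feller-Dynkin so its generator is well defined on a suitable domain). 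Summing and dividing by $h$, then letting $h \downarrow 0$, gives \eqref{eq:truncated-return:generator}.

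The main obstacle will be justifying the decomposition rigorously: specifically, showing that the conditional expectation of the increment in $X$ decouples cleanly from the deterministic increment in $\overline{G}$ at order $h$, and that no cross-term of order $h$ survives. This is where finite variation of $\overline{G}$ (Lemma~\ref{lem:finite-variation}) is essential, since it rules out any contribution from $[X, \overline{G}]$ that would otherwise appear in the Itô expansion. A secondary technicality is verifying that $\psi$ lies in the domain of $\mathscr{L}$, for which restricting attention to $C^{1,2,1}_c$ test functions (smooth in $t$ and $\overline{g}$, twice differentiable in $x$, compactly supported) is standard and suffices to identify the generator on a core, from which the formula extends by continuity.
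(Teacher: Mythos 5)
Your proposal is correct and follows essentially the same route as the paper's proof: both verify that $\indexedabove{t}{J}$ is a semimartingale, apply It\^o's lemma to $\psi(J_t)$, invoke the finite-variation property of $\indexedabove{t}{\xxTruncRet}$ (Lemma~\ref{lem:finite-variation}) to annihilate all brackets involving $\overline{G}$, and then identify the surviving first- and second-order $X$-terms with $\mathscr{L}_X$, the $d\overline{G}_t = \gamma^t r(X_t)\,dt$ term with $\gamma^t r(x)\partialderiv{}{\overline{g}}\psi$, and the deterministic time coordinate with $\partialderiv{}{t}\psi$. The obstacles you flag (vanishing cross-variation, domain of the generator) are exactly the points the paper's proof addresses via Lemmas~\ref{lem:finite-variation} and~\ref{lem:generator}.
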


\begin{proof}
  Since Lemma \ref{lem:generator} shows that $\indexedabove{t}{J}$ is
  a \hyperref[def:fd]{Feller-Dynkin process}, the existence of an
  infinitesimal generator driving this process is guaranteed.
  Let $\psi\in C^2_0(\R_+\times\mathcal{X}\times\returnspace)$ and denote $j=(t, 
  x,\overline{g})$. Then

  \begin{align*}
    \frac{P_\delta\psi(j) - \psi(j)}{\delta}
    &=
      \frac{1}{\delta}\left(\ConditionExpect{\psi(J_{t+\delta})}{J_t =
      j} - \psi(j)\right)\\ 
    &= \ConditionExpect{\frac{1}{\delta}\left(\psi(J_{t+\delta}) -
      \psi(J_t)\right)}{J_t = j}\label{eq:proof:fd:expectation}\tag{$*$}\\
  \end{align*}

  We will proceed by applying \hyperref[app:ito]{\Ito's Lemma} to this
  expectation. However, we must first verify that $\indexedabove{t}{J}$
  satisfies the hypotheses of \Ito's Lemma, namely, it must be a
  \hyperref[app:martingale]{semimartingale}. It is easy to verify that this is
  the case. We will express the tuples $\xxJointFD_t = (t, X_t, 
  \xxTruncRet_t)\in\R_+\times\xxX\times\returnspace$ as $d+2$-dimensional vectors (since
  $\xxX\subset\R^d$), where the first $d$ dimensions
  encode the state $X_t$, the $d+1$th dimension encodes the truncated return 
  $\xxTruncRet_t$, and the last dimension encodes time. We have
  
  \begin{align*}
    M_t &\triangleq \begin{bmatrix}X_t - \Expect{X_t}\\0\\0\end{bmatrix}\\
    A_t &\triangleq \begin{bmatrix}\Expect{X_t}\\\xxTruncRet_t\\t\end{bmatrix}\\
    \xxJointFD_t &= M_t + A_t
  \end{align*}

  It follows immediately from Lemma \ref{lem:finite-variation} that
  $\indexedabove{t}{A}$ is a \hyperref[app:finite-variation]{finite variation
  process}. Furthermore, since $\indexedabove{t}{X}$ is a Feller-Dynkin
  process, we know from Lemma \ref{lem:martingale-generator} that $(X_t -
  \Expect{X_t})_{t\geq 0}$ is a \hyperref[app:martingale]{martingale}. Thus,
  $\indexedabove{t}{J}$ can be expressed as a sum of a
  \hyperref[app:martingale]{local martingale}\footnote{By the definition of a
  local martingale, given in Appendix \ref{app:martingale}, it is clear that all
  martingales are local martingales.} and a finite variation process, making it a
  semimartingale by definition.

  Since $\indexedabove{t}{J}$ is a semimartingale and $\psi\in
  C_0^2(\R_+\times\mathcal{X}\times\returnspace)$, we may apply \hyperref[app:ito]{\Ito's
  lemma} to expand \eqref{eq:proof:fd:expectation} as follows, where all expectations are conditioned on $J_t = j$,

  {\small %
  \begin{align*}
    \small
    \frac{(P_\delta - \identity)\psi(j)}{\delta}
    &= \frac{1}{\delta}\Expect{\int_t^{t+\delta}\sum_{i=1}^{d+2}
    \partialderiv{\psi(J_s)}{j^i}dJ_s^i +
      \frac{1}{2}\int_t^{t+\delta}\sum_{i=1}^{d+2}\sum_{k=1}^{d+2}
      \frac{\partial^2\psi(J_s)}{\partial j^i\partial j^k}d[J^i, J^k]_s}\\
    &= \partialderiv{}{t}\psi(j) + \overbrace{\frac{1}{\delta}\Expect{\int_t^{t+\delta}\sum_{i=1}^{d}\partialderiv{\psi(J_s)}{j^i}dJ_s^i
      +
      \frac{1}{2}\int_t^{t+\delta}\sum_{i=1}^{d}\sum_{k=1}^{d}\frac{\partial^2\psi(J_s)}{\partial
      j^i\partial j^k}d[J^i, J^k]_s}}^a\\
    &\qquad+
      \overbrace{\frac{1}{\delta}\Expect{\int_t^{t+\delta}\partialderiv{\psi(J_s)}{j^{d+1}}dJ^{d+1}_s
      + \frac{1}{2}\frac{\partial^2\psi(J_s)}{\partial (j^{d+1})^2}d[J^{d+1},J^{d+1}]_s}}^b\\
    &\qquad+
      \overbrace{\frac{1}{2\delta}\Expect{\int_t^{t+\delta}
      \sum_{i=1}^d\left(\frac{\partial^2\psi(J_s)}{\partial
      j^i\partial j^{d+1}}d[J^i, J^{d+1}]_s+\frac{\partial^2\psi(J_s)}{\partial
      j^i\partial j^{d+2}}d[J^i, J^{d+2}]_s\right)}}^c
  \end{align*}
  }

  Recall that $J^{1:d}_t = \proj{1}J_t = X_t$, and $J^{d+1}_t =
  \proj{2}J_t = \overline{G}_t$. In the limit as $\delta\downarrow 0$,
  the term $a$ above therefore is
  simply the generator of the process $\indexedabove{t}{X}$ applied to
  $\psi$. Moreover, since it was shown that
  $\indexedabove{t}{\overline{G}}$ is a finite variation process in
  Lemma \ref{lem:finite-variation}, it follows that $[J^i,
  J^{d+1}] = [J^i, J^{d+2}]\equiv 0$ for any $i\in \{1,\dots,d+1\}$
  \citep{le2016brownian}.\marc{What does $i \in [d+1]$ mean here? This sentence is confusing.} Consequently, we have $c\equiv
  0$. Simplifying,

  \begin{align*}
    \lim_{\delta\to 0}\frac{P_\delta\psi(j) - \psi(j)}{\delta}
    &= \mathscr{L}_X\psi(j) + \lim_{\delta\to
      0} \frac{1}{\delta}\ConditionExpect{ \int_t^{t+\delta}
      \partialderiv{\psi(J_s)}{\overline{g}}d\overline{G}_s}{J_t =
      j} + \partialderiv{\psi( j)}{t}\\
    &= \mathscr{L}_X\psi(j) +
      \partialderiv{\psi(j)}{\overline{g}}\gamma^tr(x) + \partialderiv{}{t}\psi(j)
  \end{align*}
  This completes the proof.
\end{proof}

\cbrkbe*
\begin{proof}
    Let $z \in \returnspace$. Let $\phi:(\mathcal{O}\times\returnspace)\to\R$ be given by $\phi((x, z')) = \indicator{z'\geq 0}$. Then define the function ${u:\R_+\times(\xxX\times\returnspace)\to\R}$ according to\marc{Fix: the argument to $\phi$ needs $x$.}
    \harley{Re: Conditional probability -- I believe the absolute continuity assumption prevents cases where condition has probability 0}
  \begin{align*}
      u(t, (x, z')) &= \ConditionExpect{\phi\big (x, \cbrprocess(z)_T\big)}{\xxJointFDCB_t(z)=(x,z')}\\
      &= \Pr\left(\gamma^{-T}(z - \xxTruncRet_T)\geq 0\bigg\vert\ \xxJointFDCB_t(z)=(x,z')\right)\\
      &= \Pr\left(z \geq \xxTruncRet_T\bigg\vert\ \xxJointFDCB_t(z)=(x,z')\right)\\
      &= \Pr\left(\gamma^t z' \geq \xxTruncRet_T-\xxTruncRet_t\bigg\vert\ X_t=x\right)\\
      &= \Pr\left(z'\geq\int_0^{T-t}\gamma^sr(X_{(t+s)\land T})ds\bigg\vert\ X_t=x\right)\\
      &= \Pr\left(z'\geq\int_0^T\gamma^sr(X_{(t+s)\land T})ds\bigg\vert\ X_t=x\right)\\
      &= \Pr(\xxG{x} \leq z')\\
      &= \cdf{\returnmeasure^\pi}(x, z')
  \end{align*}
  The conditional expectation and probabilities are well-defined by Assumption \ref{ass:method:density}. Note that $u$ has precisely the form of the solution to the Kolmogorov backward equation in Theorem \ref{thm:kbe}. Thus, Theorem \ref{thm:kbe} establishes that $\cdf{\returnmeasure^\pi}(x,\cdot)$ satisfies \eqref{eq:kbe} with the infinitesimal generator $\xxL_J$ of the conditional backward return process. Finally, since $\cdf{\returnmeasure^\pi}$ is time-homogeneous, its time derivative vanishes, and we are left with \eqref{eq:cbr-kbe}.
\end{proof}

\dhjb*
\begin{proof}
  Note that the term $\partialderiv{}{z}\cdf{\returnmeasure^\pi}(x,z)$ is the Radon-Nikodym derivative of $\returnmeasure^\pi(x)$ with respect to the Lebesgue measures. This derivative exists by Assumption \ref{ass:method:density}.
  We have, for any $z\in\returnspace$, $\xxTruncRet_t = z - \gamma^t\cbrprocess(z)_t$. Since $\cbrprocess(z)_t$ can be computed by a deterministic, differentiable transformation of $\xxTruncRet_t$ for any given $z\in\returnspace$, it follows that $\indexedabove{t}{\xxJointFDCB(z)}$ is a Feller-Dynkin process for each $z\in\returnspace$. 
  
  Denote the infinitesimal generator of $\indexedabove{t}{\xxJointFDCB(z)}$ by $\xxL_J$. The 
  generator exists since the conditional backward return process is a Feller-Dynkin process, as
  previously mentioned. By a change of variables we immediately see that $\xxL_J = 
  \xxL_G\rvert_{t=0} - \log\gamma\proj{2}\partialderiv{}{z}$, where $\xxL_G$ is the 
  infinitesimal generator of the truncated return process.
  
  By Lemma \ref{lem:generator}, we know that $\cdf{\returnmeasure^\pi}$ solves the Kolmogorov backward equation for the generator $\xxL_J$. Thus,

  \begin{align*}
    0 &= \xxL_J\cdf{\returnmeasure^\pi}(x, z)\\
    &= \xxL_G\cdf{\returnmeasure^\pi}(x, z) -
    z'\log\gamma\partialderiv{}{z'}\cdf{\returnmeasure^\pi}(x, z)\\
    &= \xxL_X\cdf{\returnmeasure^\pi}(x, z) - (r(x) + 
    z\log\gamma)\partialderiv{}{z}\cdf{\returnmeasure^\pi}(x, z)\\
  \end{align*}
\end{proof}

\shjb*
\begin{proof}
    Suppose $\cdf{\returnmeasure}$ satisfies \eqref{eq:dhjb:ito}. Then, making the substitution $\cdf{\returnmeasure}(x, z) = \Psi(\statistics(x), z)$ in \eqref{eq:dhjb:ito}, we have
    
    \begin{align*}
        0
        &= \langle\nabla_x\Psi(\statistics(x), z), \xxf(x)\rangle - (r(x) + z\log\gamma)\partialderiv{}{z}\Psi(\statistics(x), z)
        + \trace\left(\quadraticform{\xxs(x)}{\hessian{x}\Psi(\statistics(x), z)}\right)\\
        &= \nabla_{\statistics(x)}\Psi(\statistics(x), z)^\top\left(\jacobian_x\statistics(x)\right)\xxf(x) - (r(x) + z\log\gamma)\partialderiv{}{z}\Psi(\statistics(x), z)\\
        &\qquad+ \trace\left[\quadraticform{\xxs(x)}{\overbrace{\jacobian_x\left(\nabla_{\statistics(x)}\Psi(\statistics(x), z)^\top\jacobian_x\statistics(x)\right)}^{(a)}}\right]\\
    \end{align*}
    
    All of the differential quantities above exist almost everywhere due to Assumption \ref{ass:method:c2} and the hypothesis that $\Phi$ is statistically smooth. It remains only to compute $(a)$. We have
    
    \begin{align*}
        (a)
        &= \jacobian_x\left(\nabla_{\statistics(x)}\Psi(\statistics(x), z)^\top\jacobian_x\statistics(x)\right)\\
        &= \jacobian_x\statistics(x)^\top\left(\hessian{\statistics(x)}\Psi(\statistics(x), z)\right)\jacobian_x\statistics(x) + \nabla_{\statistics(x)}\Psi(\statistics(x), z)^\top\jacobian_x\jacobian_x\statistics(x)\\
        &= \quadraticform{\jacobian_x\statistics(x)}{\hessian{\statistics(x)}\Psi(\statistics(x), z)}
        + \sum_{k=1}^N\partialderiv{}{\proj{k}\statistics(x)}\Psi(\statistics(x), z)\hessian{x}\proj{k}\statistics(x)\\
        &= \quadraticform{\jacobian_x\statistics(x)}{\hessian{\statistics(x)}\Phi(\statistics(x), [V_{\min}, z])}
        + \sum_{k=1}^N\partialderiv{}{\proj{k}\statistics(x)}\Phi(\statistics(x), [V_{\min}, z])\hessian{x}\proj{k}\statistics(x)\\
        &= \statsdiffuse[\Phi](x, z) + \statediffuse[\Phi](x, z)
    \end{align*}
    
    Substituting this into $(a)$ above, we arrive at the desired result.
\end{proof}

\shjbcor*
\begin{proof}
    We consider the case where $\Phi$ imputes the statistics $\statistics(x)$ to a quantile distribution.
    Let $\phi:\xxX\times\returnspace\to\R$ be an arbitrary test function in the Schwartz class $\mathscr{S}$, and let $\returnmeasure = \Phi(\statistics(x))$ such that $\cdf{\returnmeasure}$ is a distributional solution to \eqref{eq:dhjb:ito}. 
    For brevity, denote $\mathcal{Y}=\xxX\times\returnspace$.
    Denote by $\vartheta:\R\to[0,1]$ the Heaviside step function $\vartheta(z) = \indicator{z>0}$. Then, we have that
    {\small
    \begin{align*}
        0
        &= \int_{\mathcal{Y}}\bigg[\phi(x, z)\left\langle\nabla_x\sum_{k=1}^N\vartheta(z - \proj{k}\statistics(x)), \xxf(x)\right\rangle - \phi(x, z)(r(x) + z\log\gamma)\partialderiv{}{z}\sum_{k=1}^N\vartheta(z - \proj{k}\statistics(x))\\
        &\qquad\qquad\qquad + \frac{1}{2}\phi(x, z)\trace\left(\quadraticform{\xxs(x)}{\left(\hessian{x}\sum_{k=1}^N\vartheta(z -\proj{k}\statistics(x))\right)}\right)\bigg]dzdx\\
        &= \int_{\mathcal{Y}}\bigg[\left\langle\phi(x, z)\nabla_x\sum_{k=1}^N\vartheta(z - \proj{k}\statistics(x)), \xxf(x)\right\rangle - \phi(x, z)(r(x) + z\log\gamma)\partialderiv{}{z}\sum_{k=1}^N\vartheta(z - \proj{k}\statistics(x))\\
        &\qquad\qquad\qquad + \frac{1}{2}\trace\left(\quadraticform{\xxs(x)}{\phi(x,z)\left(\hessian{x}\sum_{k=1}^N\vartheta(z -\proj{k}\statistics(x))\right)}\right)\bigg]dzdx\\
    \end{align*}
    }
    
    Taking distributional derivatives once, the Heaviside step functions are transformed into Dirac distributions, yielding
    \begin{align*}
        0
        &= \int_{\mathcal{Y}}\bigg[\left\langle-\phi(x, z)\sum_{k=1}^N\dirac{\proj{k}\statistics(x)}(z)\nabla_x\proj{k}\statistics(x), \xxf(x)\right\rangle - \phi(x, z)(r(x) + z\log\gamma)\sum_{k=1}^N\dirac{\proj{k}\statistics(x)}(z)\\
        &\qquad\qquad\qquad - \frac{1}{2}\trace\left(\quadraticform{\xxs(x)}{\phi(x,z)\left(\nabla_x\sum_{k=1}^N\dirac{\proj{k}\statistics(x)}(z)\right)\nabla_x\proj{k}\statistics(x)}\right)\bigg]dzdx\\
    \end{align*}
    Next, we carry out the second spatial derivative.
    {\small
    \begin{align*}
        0
        &= \int_{\mathcal{Y}}\bigg[\left\langle-\phi(x, z)\sum_{k=1}^N\dirac{\proj{k}\statistics(x)}(z)\nabla_x\proj{k}\statistics(x), \xxf(x)\right\rangle - \phi(x, z)(r(x) + z\log\gamma)\sum_{k=1}^N\dirac{\proj{k}\statistics(x)}(z)\\
        &\qquad\qquad - \frac{1}{2}\trace\left(\quadraticform{\xxs(x)}{\phi(x,z)\left(\nabla_x\sum_{k=1}^N\dirac{\proj{k}\statistics(x)}(z)\nabla_x\proj{k}\statistics(x)\right)}\right)\bigg]dzdx\\
        &= \int_{\mathcal{Y}}\bigg[\left\langle-\phi(x, z)\sum_{k=1}^N\dirac{\proj{k}\statistics(x)}(z)\nabla_x\proj{k}\statistics(x), \xxf(x)\right\rangle - \phi(x, z)(r(x) + z\log\gamma)\sum_{k=1}^N\dirac{\proj{k}\statistics(x)}(z)\\
        &\qquad\qquad - \frac{1}{2}\trace\left(\quadraticform{\xxs(x)}{\phi(x,z)\sum_{k=1}^N\bigg[\nabla_x\dirac{\proj{k}\statistics(x)}(z)\nabla_x\proj{k}\statistics(x) + \dirac{\proj{k}\statistics(x)}(z)\hessian{x}\proj{k}\statistics(x)\bigg]}\right)\bigg]dzdx\\
        &= \int_{\mathcal{Y}}\phi(x, z)\bigg[\left\langle\sum_{k=1}^N\dirac{\proj{k}\statistics(x)}(z)\nabla_x\proj{k}\statistics(x), \xxf(x)\right\rangle + (r(x) + z\log\gamma)\sum_{k=1}^N\dirac{\proj{k}\statistics(x)}(z)\\
        &\qquad\qquad\qquad + \frac{1}{2}\trace\left(\quadraticform{\xxs(x)}{\sum_{k=1}^N \dirac{\proj{k}\statistics(x)}(z)\hessian{x}\proj{k}\statistics(x)}\right)\bigg]dzdx\\
        &\qquad\qquad\qquad + \frac{1}{2} \overbrace{\int_{\mathcal{Y}}\trace\left(\quadraticform{\xxs(x)}{\phi(x, z)\nabla_x\dirac{\proj{k}\statistics(x)}(z)\nabla_x\statistics(x)}\right)dzdx}^{(a)}\\
    \end{align*}
    }
    
    We isolate the term $(a)$ as it involves the (distributional) derivative of the Dirac distribution, which is a strange object. However, since our equation holds for any test function $\phi$, we will show that, with the right choice of test function, $(a) = 0$.
    
    Choose any $\overline{x}\in\xxX$ and let $\epsilon>0$. Then let $\phi(x, z) = \varrho_\epsilon(x)\psi(z)$ where $\varrho_\epsilon:\xxX\to\R$ and $\psi:\returnspace\to\R$ are members of the Schwartz class $\mathscr{S}$.
    We define $\varrho_\epsilon(x)$ as follows,
    
    \begin{align*}
        \varrho_\epsilon(x) &= \frac{1}{\epsilon\sqrt{\pi}}\exp\left(-\frac{\norm{x - \overline{x}}^2}{\epsilon^2}\right)
    \end{align*}
    
    It is well known that $\varrho_\epsilon$ is a Schwartz function \citep{lax2002functional}. Moreover, since $\nabla_x\varrho_\epsilon(\overline{x}) = 0$ and $\varrho_\epsilon$ is smooth, we can find a neighborhood $B$ of $\overline{x}$ so small that $\sup_{x_1,x_2\in B}\norm{x_1 - x_2}\leq\epsilon$. We are left with
    
    \begin{align*}
        (a) &= \lim_{\epsilon\to 0}\bigg[\int_{B}\int_{\returnspace}\trace\left(\quadraticform{\xxs(x)}{\phi(x, z)\nabla_x\dirac{\proj{k}\statistics(x)}(z)\nabla_x\proj{k}\statistics(x)}\right)dzdx\\
        &\qquad\qquad+ \int_{\xxX\setminus  B}\int_{\returnspace}\trace\left(\quadraticform{\xxs(x)}{\phi(x, z)\nabla_x\dirac{\proj{k}\statistics(x)}(z)\nabla_x\proj{k}\statistics(x)}\right)dzdx\bigg]\\
        &= \lim_{\epsilon\to 0}\bigg[-\overbrace{\int_{B}\int_{\returnspace}\trace\left(\quadraticform{\xxs(x)}{\psi(z)\nabla_x\varrho_{\epsilon(x)}\dirac{\proj{k}\statistics(x)}(z)\nabla_x\proj{k}\statistics(x)}\right)dzdx}^{\mathcal{M}_\epsilon}\\
        &\qquad\qquad- \overbrace{\int_{\xxX\setminus B}\int_{\returnspace}\trace\left(\quadraticform{\xxs(x)}{\psi(z)\nabla_x\varrho_{\epsilon}(x)\dirac{\proj{k}\statistics(x)}(z)\nabla_x\proj{k}\statistics(x)}\right)dzdx}^{\mathcal{E}_\epsilon}\bigg]\\
    \end{align*}
    
    It is also well-known $\lim_{\epsilon\to 0}\varrho_\epsilon = \dirac{\overline{x}}$ \citep{lax2002functional}. Since necessarily $\overline{x}\not\in\xxX\setminus B$, the term $\mathcal{E}_\epsilon$ vanishes. Given that $\sup_{x_1,x_2\in B}\norm{x_1-x_2}\leq\epsilon$, we have
    
    \begin{align*}
        |\mathcal{M}_\epsilon|
        &\leq \epsilon\sup_{x\in B}\left|\int_{\returnspace}\trace\left(\quadraticform{\xxs(x)}{\psi(z)\dirac{\proj{k}\statistics(x)}(z)\nabla_x\proj{k}\statistics(x)}\right)dz\right|\\
        &= \epsilon\sup_{x\in B}\left|\trace\left(\quadraticform{\xxs(x)}{\psi(\proj{k}\statistics(x))\nabla_x\proj{k}\statistics(x)}\right)dz\right|\\
    \end{align*}
    
    By the assumption that $\statistics(x)$ is almost-everywhere differentiable, the supremum above is bounded for almost every $\overline{x}$, and it follows that $|\mathcal{M}_\epsilon|\to 0$ almost surely.
    
    We are left with the following equation:
    
    \begin{equation*}
    \begin{aligned}
      0
      = \lim_{\epsilon\to 0}\int_{\xxX}\int_{\returnspace}\varrho_\epsilon(x)\psi( z)\sum_{k=1}^N\dirac{\proj{k}\statistics(x)}(z)\bigg[\langle\nabla_x\proj{k}\statistics(x), \xxf(x)\rangle + r(x) + z\log\gamma\qquad &\\
      \qquad+\frac{1}{2}\trace\left(\quadraticform{\xxs(x)}{\hessian{x}\proj{k}\statistics(x)}\right)\bigg]dzdx&
      \end{aligned}
    \end{equation*}
    
    Given that $\Phi(\statistics(x))$ is statistically smooth, it is a tempered distribution, so this limit exists. We mentioned previously that $\varrho_\epsilon\to\dirac{\overline{x}}$, so we have
    
    \begin{equation*}
    \begin{aligned}
      0
      = \int_{\returnspace}\psi(z)\sum_{k=1}^N\dirac{\proj{k}\statistics(\overline{x})}(z)\bigg[\langle\nabla_x\proj{k}\statistics(\overline{x}), \xxf(\overline{x})\rangle + r(\overline{x}) + z\log\gamma\qquad&\\
      +\frac{1}{2}\trace\left(\quadraticform{\xxs(\overline{x})}{\hessian{x}\proj{k}\statistics(\overline{x})}\right)\bigg]dz&
      \end{aligned}
    \end{equation*}
    
    It follows by definition that $\Phi(\statistics(x))$ is a distributional solution to
    
    \begin{equation*}
    \begin{aligned}
      0
      &= \sum_{k=1}^N\dirac{\proj{k}\statistics(\overline{x})}(z)\bigg[\langle\nabla_x\proj{k}\statistics(\overline{x}), \xxf(\overline{x})\rangle + r(\overline{x}) + z\log\gamma
      +\frac{1}{2}\trace\left(\quadraticform{\xxs(\overline{x})}{\hessian{x}\proj{k}\statistics(\overline{x})}\right)\bigg]
      \end{aligned}
    \end{equation*}
    
    Note that the equation above is a sum of weighted Diracs. Thus, the only way for it to be satisfied is if each of the terms in the sum individually vanishes. So, we have shown that for each $k\in[N]$ and almost every $x\in\xxX$, the statistics function $\proj{k}\statistics$ is a distributional solution of
    
    \begin{align*}
        0 &= \langle\nabla_x\proj{k}\statistics(x), \xxf(x)\rangle + r(x) + \proj{k}\statistics(x)\log\gamma + \frac{1}{2}\trace\left(\quadraticform{\xxs(x)}{\hessian{x}\proj{k}\statistics(x)}\right)
    \end{align*}
    
    This completes the proof.
\end{proof}

\subsection{Solution of the Kolmogorov Backward Equation}\label{app:proofs:kbe}
Recall the identity presented about the solution of the Kolmogorov
Backward Equation as an expectation,

\kbe*

In order to prove Theorem \ref{thm:kbe}, the following lemma will be
handy.

\begin{lemma}[\citep{le2016brownian}, Theorem 6.14]\label{lem:martingale-generator}
  Let $(X_t)_{t\geq 0}$ be a Feller-Dynkin process on a metric space
  $\mathcal{X}$, and consider functions $h, g\in
  C_0(\mathcal{X})$. The following two conditions are equivalent:
  \begin{enumerate}
  \item $h\in\mathscr{D}(\mathscr{L})$ and $\mathscr{L}h = g$;
  \item For each $x\in\mathcal{X}$, the process
    \begin{equation*}
      \Conditional{h(X_t) - \int_0^tg(X_s)ds}{X_0 = x}
    \end{equation*}
    is a \hyperref[app:martingale]{martingale} with respect to
    the filtration $(\mathcal{F}_t)$.
  \end{enumerate}
\end{lemma}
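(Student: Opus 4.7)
The plan is to prove the equivalence via the fundamental Dynkin-type identity $P_t h - h = \int_0^t P_s(\mathscr{L}h)\, ds$, valid for $h \in \mathscr{D}(\mathscr{L})$, combined with the Markov property of $(X_t)_{t \geq 0}$. Both directions of the equivalence reduce to manipulations of this identity.

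For the forward implication (1) $\Rightarrow$ (2), I would directly compute the conditional expectation $\mathbf{E}[M_{t+s} \mid \mathcal{F}_t]$ of the candidate martingale $M_u = h(X_u) - \int_0^u g(X_v)\, dv$. The Markov property gives $\mathbf{E}[h(X_{t+s}) \mid \mathcal{F}_t] = P_s h(X_t)$, which by the Dynkin identity expands to $h(X_t) + \int_0^s P_u g(X_t)\, du$. Splitting the integral $\int_0^{t+s} g(X_v)\, dv$ at $t$ and applying the Markov property once more to the tail piece produces exactly $\int_0^s P_u g(X_t)\, du$, which cancels the contribution from the semigroup expansion. This leaves $\mathbf{E}[M_{t+s} \mid \mathcal{F}_t] = M_t$, as required.

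For the reverse implication (2) $\Rightarrow$ (1), I would take expectations under $X_0 = x$ of the martingale identity at times $0$ and $t$. Using $\mathbf{E}[h(X_t) \mid X_0 = x] = P_t h(x)$ and Fubini--Tonelli on the integral term yields $P_t h(x) - h(x) = \int_0^t P_s g(x)\, ds$. Dividing by $t$ and letting $t \downarrow 0$ makes the right-hand side tend to $g(x)$ and the left-hand side to the defining quotient of $\mathscr{L}h(x)$. To conclude that $h$ lies in $\mathscr{D}(\mathscr{L})$, this convergence must hold in the sup-norm on $C_0(\mathcal{X})$; strong continuity of the Feller semigroup gives $\|P_s g - g\|_\infty \to 0$, so the averaged quantity $t^{-1} \int_0^t P_s g\, ds$ converges uniformly to $g$, and hence $t^{-1}(P_t h - h) \to g$ in the appropriate norm.

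The main obstacle is the reverse direction, where a pointwise identity obtained from the martingale property must be upgraded to uniform convergence in order to establish membership in the generator's domain. This is precisely where the Feller-Dynkin hypothesis is indispensable: without strong continuity on $C_0(\mathcal{X})$, one would only obtain weak-sense equalities rather than a genuine generator identity. The forward direction, by contrast, is essentially a bookkeeping exercise once the Dynkin identity and the Markov property are in place.
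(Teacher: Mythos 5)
The paper offers no proof of this lemma---it is imported verbatim from \citet{le2016brownian} (Theorem 6.14) and used as a black box in the proof of Theorem \ref{thm:kbe}---so there is no in-paper argument to compare against. Your proof is correct and is essentially the standard one from that reference: the forward direction is the Dynkin-identity/Markov-property computation, and the reverse direction takes expectations under $X_0=x$ and uses strong continuity of the Feller semigroup to upgrade the pointwise limit of $t^{-1}(P_t h - h)$ to sup-norm convergence, which you rightly identify as the step where the Feller hypothesis is essential for concluding $h\in\mathscr{D}(\mathscr{L})$.
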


\begin{proof}[Proof of Theorem \ref{thm:kbe}]
  By Lemma \ref{lem:martingale-generator}, we know that the process
  $\Phi_t = \phi(X_t) -  \int_0^tg(X_s)ds$ is a martingale with
  respect to $(\mathcal{F}_t)$. Let $s<t<T$. By the definition of a
  martingale, we have

  \begin{equation*}
    \small
    \begin{aligned}
      0 &= \Expectation{}{\Conditional{\Phi_T}{\mathcal{F}_t}} -
      \Expectation{}{\Conditional{\Phi_T}{\mathcal{F}_s}}\\
      &=
    \Expectation{}{\Conditional{h(X_T) +
        \int_0^Tg(X_r)dr}{\mathcal{F}_t}} -
    \Expectation{}{\Conditional{h(X_T) + \int_0^Tg(X_r)dr}{\mathcal{F}_s}}\\
    \Expectation{}{\Conditional{\int_s^t\mathscr{L}h(X_r)dr}{\mathcal{F}_t}}
    &=
    \Expectation{}{\Conditional{h(X_T)}{\mathcal{F}_t}} -
    \Expectation{}{\Conditional{h(X_T)}{\mathcal{F}_s}}
    \end{aligned}
  \end{equation*}

  Dividing through by $t - s$ and taking the limit as $s\uparrow t$,

  \begin{equation*}
    \begin{aligned}
      \partialderiv{}{s}\Expectation{}{\Conditional{\phi(X_T)}{\mathcal{F}_s}}
      = \partialderiv{}{s}u(x, s)
      &\overset{(a)}{=}
      \Expectation{}{\Conditional{\partialderiv{}{s}\int_s^t\mathscr{L}\phi(X_r)dr}{\mathcal{F}_t}}\\
      &= -\Expectation{}{\Conditional{\mathscr{L}\phi(X_r)dr}{\mathcal{F}_s}}\\
      &\overset{(b)}{=}-\mathscr{L}\Expectation{}{\Conditional{\phi(X_s)}{\mathcal{F}_s}}\\
      &= -\mathscr{L}u(x, s)
    \end{aligned}
  \end{equation*}

  Step $(a)$ is allowed by the Leibniz integration rule since the
  infinitesimal generator preserves continuity and $\phi$ is
  absolutely continuous by assumption. Finally, step $(b)$ is allowed
  by the linearity of expectation, since $\mathscr{L}$ is a linear operator.
\end{proof}

\section{Further Experiment Details}\label{app:experiments}
Figure \ref{fig:birdseye} below demonstrates that the continuous-time algorithm does
indeed learn more accurate representations of the return distribution function than QTD.

\begin{figure}[h]
  \centering
  \newcommand{\figwidth}{0.495\textwidth}
  \newcommand{\birdseyescale}{0.74}
  QTD\\
  \begin{minipage}{\figwidth}
  \centering
    \includegraphics[scale=\birdseyescale]{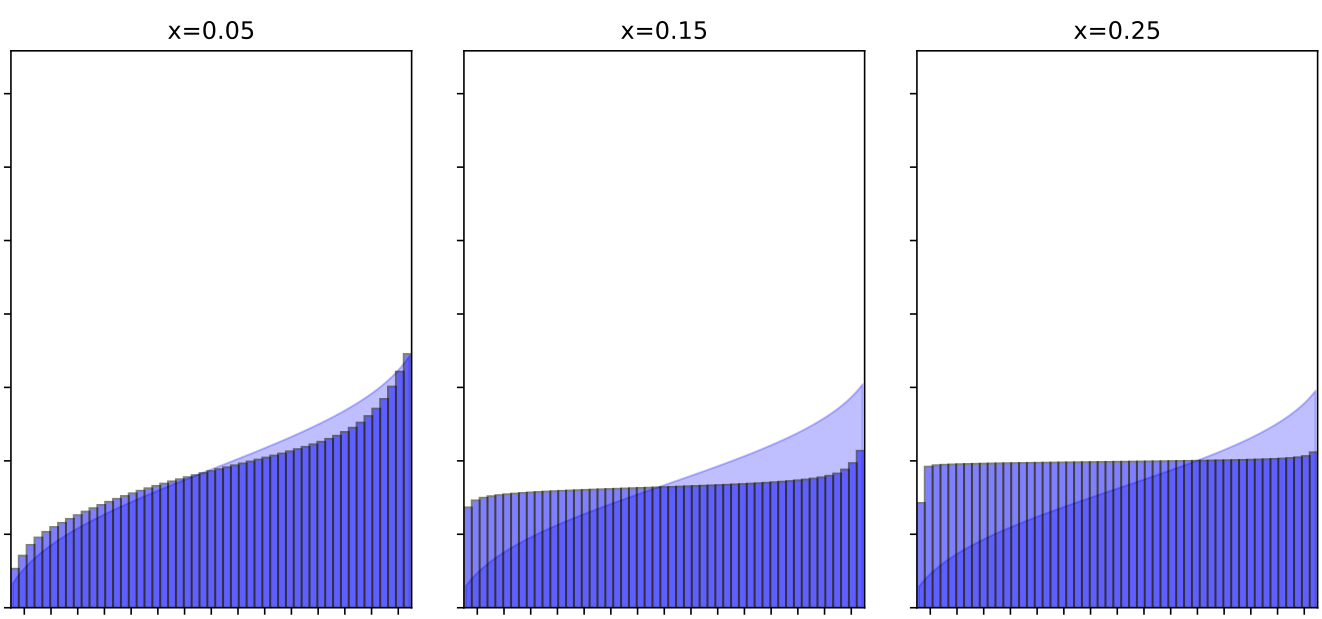}
  \end{minipage}
  \begin{minipage}{\figwidth}
  \centering
    \includegraphics[scale=\birdseyescale]{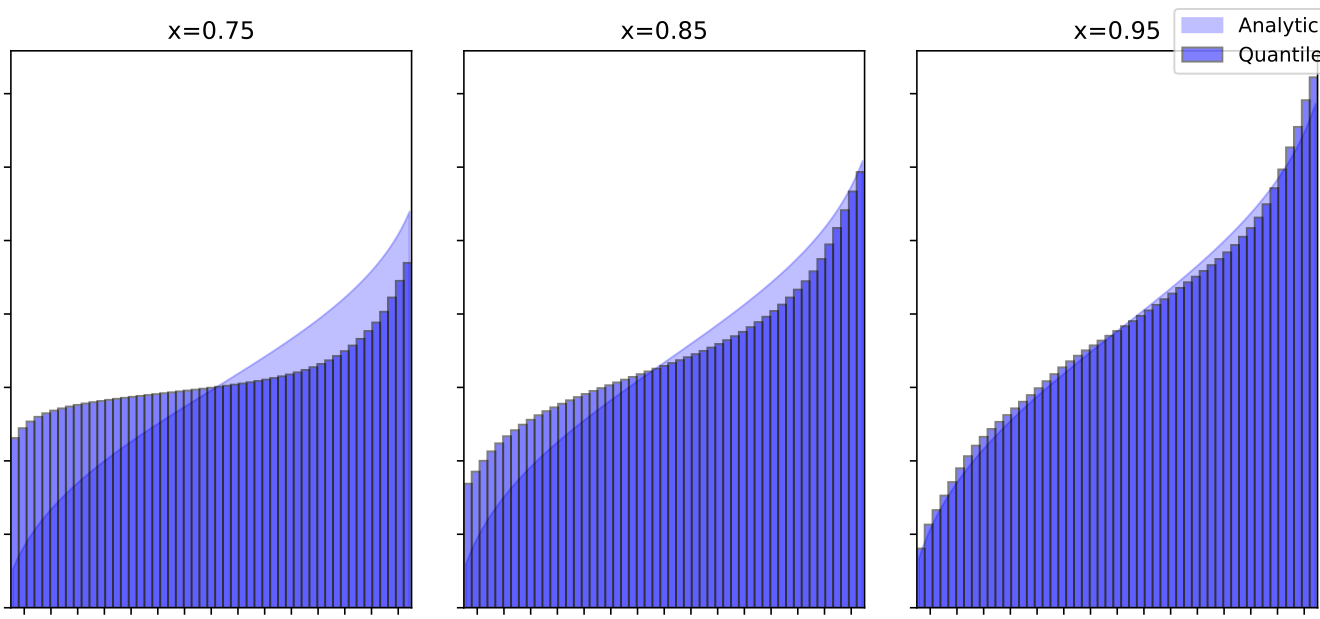}
  \end{minipage}
  \\\vspace{0.5cm}
  FD-WGF Q-Learning\\
  \begin{minipage}{\figwidth}
  \centering
    \includegraphics[scale=\birdseyescale]{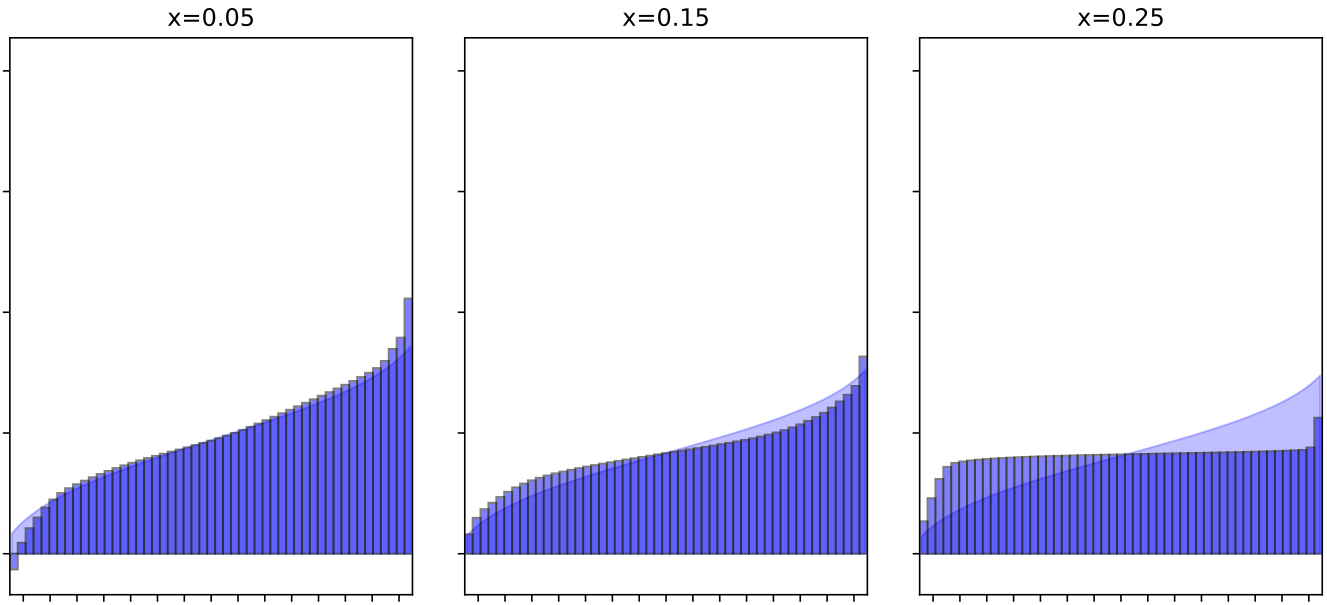}
  \end{minipage}
  \begin{minipage}{\figwidth}
  \centering
    \includegraphics[scale=\birdseyescale]{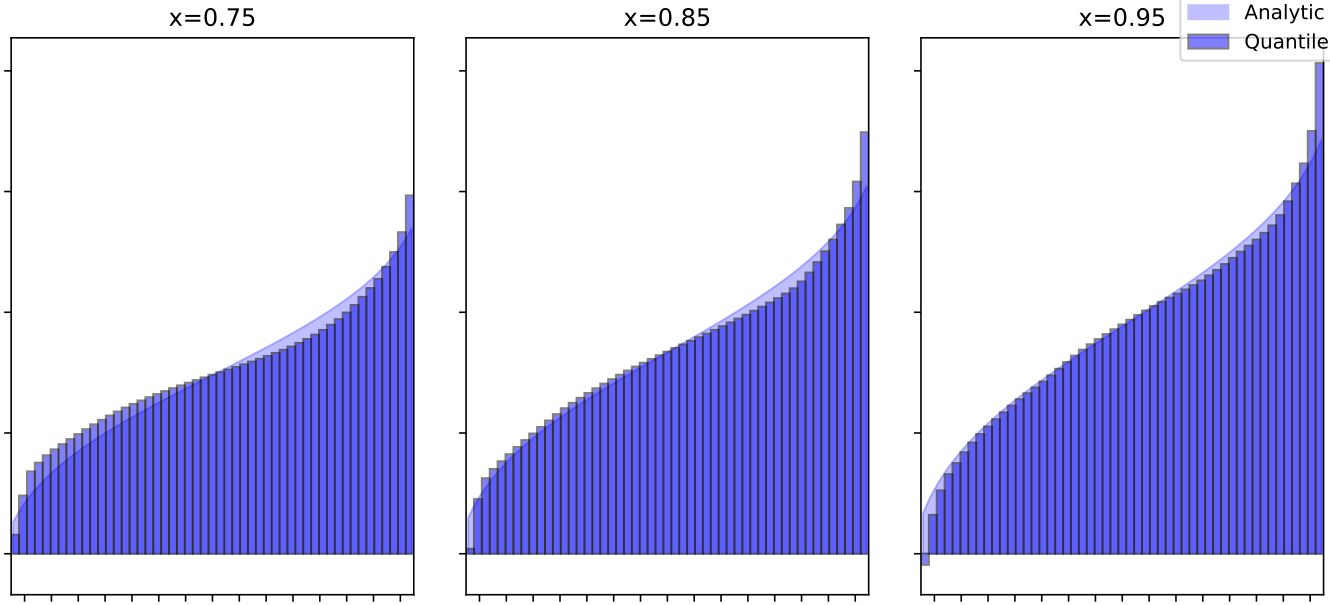}
  \end{minipage}
  \caption{Quantile functions learned for the toy problem}
  \label{fig:scan}
\end{figure}
\section{Tools from the Theory of Stochastic Processes}\label{app:stochastic}
This appendix will survey some concepts from the theory of stochastic processes
that are useful in the developments of this work.

\subsection{Some Special Classes of Stochastic Processes}
\subsubsection{Measurable, Adapted, and Progressive Processes}\label{app:adapted}
When dealing with stochastic processes, there are a few properties that we
generally desire in order for us to be able to analyze them nicely. The most
common examples will be summarized here. These definitions are due to
\citet{le2016brownian}.

For the following definitions, we will fix a
probability space $(\Omega, \mathcal{F},
\Pr)$, and we will consider a stochastic process
$\indexedabove{t}{X}\subset\mathcal{X}$, where $(\mathcal{X}, \Sigma)$ is a
measurable space.

\begin{definition}[Measurable Process]\label{def:process:measurable}
  The process
  $\indexedabove{t}{X}\subset \mathcal{X}$
  is said to be
  \emph{measurable} if $(\omega, t)\mapsto X_t(\omega)$ is a measurable map on
  $\Omega\times\mathbf{R}_+$ with respect to the smallest $\sigma$-algebra on
  $\mathscr{B}(\mathbf{R}_+)\times\mathcal{F}$.
\end{definition}

For the remainder of the definitions, we will also consider a
\hyperref[def:filtration]{filtration} (see Definition \ref{def:filtration})
$\indexedabove{t}{\mathcal{F}}$ making $(\Omega, \mathcal{F},
\indexedabove{t}{\mathcal{F}},\Pr)$ a filtered probability space.

\begin{definition}[Adapted Process]\label{def:process:adapted}
  The process $\indexedabove{t}{X}\subset\mathcal{X}$ is \emph{adapted} if $X_t$
  is $\mathcal{F}_t$-measurable for every $t\geq 0$.
\end{definition}

\begin{definition}[Progressive Process]\label{def:process:progressive}
  The process $\indexedabove{t}{X}\subset\mathcal{X}$ is \emph{progressive}
  (or \emph{progressively measurable}) if $(\omega, s)\mapsto X_t(\omega)$ is
  measurable on $\Omega\times[0,t]$ with respect to the smallest
  $\sigma$-algebra on $\mathcal{F}_t\times\mathscr{B}([0,t])$ for each $t\geq
  0$.
\end{definition}

\subsubsection{Martingales}\label{app:martingale}
\begin{definition}[Martingales, \cite{rogers1994diffusions}]
  \label{def:martingale}
  A \textbf{martingale} (relative to a given
  \hyperref[def:filtration]{filtration} $(\mathcal{F}_t)_{t\geq 0}$)
  is a stochastic process $(M_t)_{t\geq 0}$ where 
  $M_t\in L^1$ and
  \begin{equation}
    \label{eq:martingale:property}
    M_s = \ConditionExpect{M_t}{\mathcal{F}_s}\qquad 0\leq s \leq t
  \end{equation}

  Equation \eqref{eq:martingale:property} is referred to as ``the
  martingale property''. If the equality in
  \eqref{eq:martingale:property} is instead $\geq$ (resp. $\leq$), $(M_t)_{t\geq 0}$
  is called a \textbf{supermartingale} (resp. \textbf{submartingale}).
\end{definition}

\begin{definition}[Local Martingales, \cite{le2016brownian}]
  \label{def:local-martingale}
  A \textbf{local martingale} is a stochastic process $(M_t)_{t\geq
    0}$ for which there exists a sequence of nondecreasing
  \hyperref[def:stopping-time]{stopping times} $(T_n)_{n=1}^\infty$
  such that $M^{T_n} = (M_{t\land T_n})_{t\geq 0}\in L^1$ is a martingale.
\end{definition}

\begin{definition}[Semimartingales, \cite{le2016brownian}]\label{def:semimartingale}
  A \textbf{semimartingale} is a random process $(X_t)_{t\geq 0}$ such
  that $X_t = A_t + M_t$ for each $t\geq 0$, where $(A_t)_{t\geq 0}$
  is a \hyperref[app:finite-variation]{finite variation process} and
  $(M_t)_{t\geq 0}$ is a local martingale.
\end{definition}

\subsubsection{Finite Variation Processes}\label{app:finite-variation}
\begin{definition}[Finite Variation Function, \cite{le2016brownian}]
  Let $T\geq 0$. A continuous function $a : [0,T]\to\mathbf{R}$ with
  $a(0) = 0$ is said to have \textbf{finite variation} if there exists
  a signed measure $\mu$ on $[0,T]$ such that $a(t) = \mu([0,t])$ for
  any $t\in [0,T]$.
\end{definition}

A finite variation process is a process whose regularity is given by
finite variation sample paths, as formalized in the next definition.

\begin{definition}[Finite Variation Process, \cite{le2016brownian}]
  A process $(A_t)_{t\geq 0}$ is called a \textbf{finite variation
    process} if all of its sample paths are finite variation functions
  on $\mathbf{R}_+$.
\end{definition}

The following processes generalize the notion of covariance of random variables
to stochastic processes, and appear frequently in important stochastic calculus
theorems. Their definitions are given by \citet{le2016brownian}.

\begin{definition}[Quadratic Variation]\label{def:quadratic-variation}
  Let $\indexedabove{t}{M}$ be a \hyperref[def:local-martingale]{local
  martingale}. The \emph{quadratic variation} of $\indexedabove{t}{M}$,
  denoted $\indexedabove{t}{[M,M]}$, is the unique increasing process such
  that $(M^2_t - [M,M]_t)_{t\geq 0}$ is a local martingale.
\end{definition}

\begin{remark}
  The existence and uniqueness of the quadratic variation is shown by
  \citet[Theorem 4.9]{le2016brownian}.
\end{remark}

\begin{definition}[The Bracket of Local Martingales]\label{def:bracket}
  Let $\indexedabove{t}{M},\indexedabove{t}{N}$ be local martingales. The
  \emph{bracket} of $M,N$, denoted $\indexedabove{t}{[M,N]}$ is the finite
  variation process $\indexedabove{t}{[M,N]}$ given by
  \begin{align*}
    [M,N]_t &= \frac{1}{2}\bigg([M+N,M+N]_t - [M,M]_t - [N,N]_t\bigg)
  \end{align*}
\end{definition}

\subsection{\Ito's Lemma}
\Ito's Lemma is a very powerful tool in the analysis of stochastic
processes. It can be thought of as a stochastic analog to Taylor's theorem.

\begin{theorem}[\Ito's Lemma, \cite{le2016brownian}]\label{app:ito}
  Let $(X^i)_{i=1}^p$ be real valued
  \hyperref[def:semimartingale]{semimartingales} and let $f\in
  C^2(\mathbf{R})$. Let $\mathbf{X}_t = (X_t^1,\dots,X_t^p)$. Then, for every $t\geq 0$,

  \begin{equation}
    \begin{aligned}
    \label{eq:ito}
    f(\mathbf{X}_t) = f(\mathbf{X}_0) +
    \sum_{i=1}^p\int_0^t\partialderiv{f}{x^i}(\mathbf{X}_s)dX_s^i +
    \frac{1}{2}\sum_{i=1}^p\sum_{j=1}^p\int_0^t\frac{\partial^2f}{\partial
      x_i\partial x_j}(\mathbf{X}_s)d[X^i, X^j]_s
    \end{aligned}
  \end{equation}
\end{theorem}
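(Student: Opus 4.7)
The plan is to prove this classical formula via a refining-partition argument, preceded by a standard localization step. I would first choose stopping times $T_n = \inf\{t : \max_i |X^i_t| \vee [X^i,X^i]_t \geq n\}$ with $T_n \uparrow \infty$ almost surely, so that on $\{s \leq t \wedge T_n\}$ the process $\mathbf{X}$ takes values in a compact set $K \subset \mathbf{R}^p$. On $K$, $f$ and its first two derivatives are uniformly bounded and uniformly continuous, which is essential for the remainder estimates. Proving the formula for each stopped process $\mathbf{X}^{T_n}$ then yields the general case by letting $n \to \infty$ and invoking dominated convergence for the ordinary integrals together with the continuity of the stochastic integral under stopping.

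For a partition $0 = t_0 < t_1 < \cdots < t_n = t$ with mesh $|\pi|$, I would telescope the increment $f(\mathbf{X}_t) - f(\mathbf{X}_0)$ and apply Taylor's theorem to second order at each node:
\[
f(\mathbf{X}_{t_{k+1}}) - f(\mathbf{X}_{t_k}) = \sum_i \partial_i f(\mathbf{X}_{t_k})\, \Delta_k X^i + \tfrac{1}{2} \sum_{i,j} \partial^2_{ij} f(\mathbf{X}_{t_k})\, \Delta_k X^i \Delta_k X^j + R_k,
\]
where $\Delta_k X^i = X^i_{t_{k+1}} - X^i_{t_k}$ and $R_k$ is the Taylor remainder. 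As $|\pi| \to 0$, the first sum converges in probability to $\sum_i \int_0^t \partial_i f(\mathbf{X}_s)\,dX^i_s$, since this is precisely the definition of the semimartingale integral of the caglad integrand $\partial_i f(\mathbf{X}_\cdot)$ as a limit of left-Riemann sums.

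The crux is the second sum. By the polarization identity encoded in Definition \ref{def:bracket}, it suffices to show, for each real continuous semimartingale $Y$ and bounded continuous $\phi$, that $\sum_k \phi(\mathbf{X}_{t_k})\, (\Delta_k Y)^2 \to \int_0^t \phi(\mathbf{X}_s)\,d[Y,Y]_s$ in probability. This combines the defining property of quadratic variation (Definition \ref{def:quadratic-variation}), namely $\sum_k (\Delta_k Y)^2 \to [Y,Y]_t$ in probability, with a uniform-continuity transfer for the weight $\phi \circ \mathbf{X}$, valid by path continuity and the localization. The standard route is to approximate $\phi \circ \mathbf{X}$ uniformly by simple predictable step processes, establish the identity for step processes via summation by parts, and then pass to the limit using a BDG-type bound on the martingale part of $Y^2 - [Y,Y]$.

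Finally, the remainder satisfies $|R_k| \leq \omega(|\Delta_k \mathbf{X}|)\, |\Delta_k \mathbf{X}|^2$, where $\omega$ is a modulus of continuity for $\partial^2 f$ on $K$ with $\omega(\delta) \to 0$; path continuity forces $\sup_k \omega(|\Delta_k \mathbf{X}|) \to 0$, while $\sum_k |\Delta_k \mathbf{X}|^2$ is tight by the quadratic-variation bound applied coordinate-wise, so $\sum_k R_k \to 0$ in probability. The main obstacle is the middle step: the bracket is itself defined only as an in-probability limit, so weighting by the nonconstant factor $\phi(\mathbf{X}_{t_k})$ and exchanging limits requires the approximation-by-simple-processes detour above. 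Once that is in hand, combining the three pieces and undoing the localization delivers \eqref{eq:ito}.
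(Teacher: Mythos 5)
This theorem is not proved in the paper at all: it appears in Appendix \ref{app:stochastic} as imported background, stated verbatim with a citation to \citet{le2016brownian}, so there is no in-paper argument to compare against. Judged on its own merits, your sketch is the standard (and correct) proof of It\^o's formula for continuous semimartingales --- localization by stopping times, a telescoping second-order Taylor expansion along a refining partition, convergence of the left-endpoint Riemann sums to the stochastic integrals, convergence of the weighted squared-increment sums to the bracket integrals via polarization, and a modulus-of-continuity bound killing the remainder --- which is essentially the proof given in the cited reference (Le Gall, Theorem 5.10). You correctly identify the genuine crux, namely that $[Y,Y]$ is itself only an in-probability limit, so the weighted sums $\sum_k \phi(\mathbf{X}_{t_k})(\Delta_k Y)^2$ require a separate argument rather than a naive exchange of limits.

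Two minor quibbles. First, the convergence of left-Riemann sums $\sum_k \partial_i f(\mathbf{X}_{t_k})\,\Delta_k X^i$ to $\int_0^t \partial_i f(\mathbf{X}_s)\,dX^i_s$ is not ``precisely the definition'' of the semimartingale integral in the framework the paper adopts: in \citet{le2016brownian} the integral is built by $L^2$-isometry on predictable integrands and extended by localization, and the Riemann-sum convergence for continuous adapted integrands is a separate proposition that you should invoke. Second, your detour through approximation by simple predictable processes and a BDG bound for the weighted quadratic-variation step works, but there is a shorter route consistent with the toolkit already stated in the appendix: write $(\Delta_k Y)^2 = \Delta_k(Y^2) - 2Y_{t_k}\Delta_k Y$, use the integration-by-parts identity $Y_t^2 = Y_0^2 + 2\int_0^t Y_s\,dY_s + [Y,Y]_t$, and thereby reduce the weighted sum to the same Riemann-sum convergence theorem applied to the integrators $Y$ and the increasing finite-variation process $[Y,Y]$, for which the weighted convergence is just dominated convergence pathwise. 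Finally, note that the statement as printed has the typo $f\in C^2(\mathbf{R})$ where $f\in C^2(\mathbf{R}^p)$ is meant, and that your reliance on path continuity (for the uniform-continuity transfer and $\sup_k|\Delta_k\mathbf{X}|\to 0$) is legitimate here because the paper's Definitions \ref{def:semimartingale} and of finite variation functions are framed for continuous processes.
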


\section{Continuous-Time Markov Processes}\label{app:ctmp}

While discrete-time Markov processes are common in the reinforcement
learning literature, continuous-time Markov processes (particularly
\emph{stochastic} continuous-time Markov processes) are not a trivial extrapolation.

To begin, we recall the definition of a \emph{filtration}, which
extends the notion of a $\sigma$-algebra to time-dependent random
variables (i.e., stochastic processes).

\begin{definition}[Filtration,
  \cite{le2016brownian}]\label{def:filtration}
  Let $(\Omega, \mathcal{F}, \Pr)$ be a probability space. A
  \emph{filtration} of $\mathcal{F}$ is a collection
  $(\mathcal{F}_t)_{t\geq 0}$ of
  $\sigma$-algebras where $\mathcal{F}_t\subset\mathcal{F}$
  for each $t$, and $\mathcal{F}_s\subset\mathcal{F}_t$ whenever
  $s<t$. A probability space associated with a filtration is called a
  \emph{filtered probability space}, and is written as the 4-tuple
  $(\Omega, \mathcal{F}, (\mathcal{F}_t)_{t\geq 0}, \Pr)$.
\end{definition}

\begin{definition}[Canonical Filtration, \cite{le2016brownian}]\label{def:canonical-filtration}
    Let $\indexedabove{t}{X}$ be a stochastic process on a probability space $(\Omega, \mathcal{F}, \Pr)$. The \emph{canonical filtration} is a filtration $\indexedabove{t}{\mathcal{F}}$ where $\mathcal{F}_t$ is the $\sigma$-algebra generated by all observations of the process $\indexedabove{t}{X}$ occuring at or before time $t$.
\end{definition}

A Markov process can then be defined as a stochastic process on a
filtered probability space that satisfies a Markov property.

\marc{Fix $X$ to not be the state space. Define norm.}
\begin{definition}[Markov Process,
  \cite{rogers1994diffusions}]\label{def:transition-semigroup}
  Let $(X_t)_{t\geq 0}$ be a stochastic process in the
  \hyperref[def:filtration]{filtered probability space} $(\Omega,
  \mathcal{F}, (\mathcal{F}_t)_{t\geq 0}, \Pr)$. A \emph{Markovian
    transition kernel}
  $P_t:\Omega\times\mathcal{F}\to[0,1]$
  is a transition kernel with a continuous parameter $t$,
  such that for any bounded
  $\mathscr{B}(\mathbf{R}_+)\otimes\mathcal{F}$-measurable function $f$, we have

  \begin{equation}\label{eq:markov-kernel}
    (P_tf)(s, X_s) = \ConditionExpect{f(s+t, X_{s+t})}{\mathcal{F}_s}\qquad
                                                                  \Pr-\text{almost surely}
  \end{equation}

  A collection $(P_t)_{t\geq 0}$ of Markovian transition kernels is called a
  \emph{transition semigroup}\footnote{This name emphasizes
    the semigroup nature of the collection of transition kernels. In
    the abstract algebra literature, a semigroup is a set of objects that is
  closed under an associative binary operation.} when
  \begin{enumerate}
  \item For each $t\geq 0$ and $x\in\Omega$, $P_t(x,\cdot)$ is a
    measure on $\mathcal{F}$ and $P_t(x,\Omega)\leq 1$;
  \item For each $t\geq 0$ and $\Gamma\in\mathcal{F}$, the mapping
    $P_t(\cdot,\Gamma)$ is $\mathcal{F}$-measurable; and
  \item (The Chapman-Kolmogorov Identity) For each $s,t\geq 0$,each
    $x\in\Omega$, and each $\Gamma\in\mathcal{F}$, the collection
    satisfies

    \begin{align*}
    P_{s+t}(x,
    \Gamma) = \int_\Omega P_s(x, dy)P_t(y, \Gamma)
    \end{align*}
  \end{enumerate}
  Then $P_tP_s = P_{t+s}$, so $\indexedabove{t}{P}$ is indeed a semigroup.

  A \emph{Markov process} is a stochastic process $\indexedabove{t}{X}$ together
  with a transition semigroup $\indexedabove{t}{P}$ such that
  \eqref{eq:markov-kernel} holds.
\end{definition}

Markov processes with smooth transition kernels are often
desirable. This notion is formalized by the following concept.

\begin{definition}[Feller-Dynkin Process, Infinitesimal Generator,
  \cite{rogers1994diffusions}]\label{def:fd}
  Consider a filtered probability space $(\Omega, \mathcal{F},
  (\mathcal{F}_t)_{t\geq 0}, \Pr)$ and let $\mathcal{X}$ be a Polish\footnote{A
    Polish space is a complete
    metric space that has a countable, dense
    subset.} space. A transition semigroup
  $(P_t)_{t\geq 0}$ is said to be a \emph{Feller semigroup} if
  \begin{enumerate}
  \item $P_t : C_0(\mathcal{X})\to C_0(\mathcal{X})$ for each
    $t\in\mathbf{R}_+$;
  \item For any $f\in C_0(\mathcal{X})$ with $f\leq 1$, $P_tf\in[0,1]$;
  \item $P_sP_t = P_{s+t}$ and $P_0=\identity$;
  \item For any $f\in C_0(\mathcal{X})$, we have
    $\|P_tf-f\|\overset{t\downarrow 0}{\longrightarrow} 0$.
  \end{enumerate}

  A Markov process with a Feller semigroup is called a
  \emph{Feller-Dynkin process}.

  Define the set $\mathscr{D}(\mathscr{L})$ according to
  \begin{align*}
    \mathscr{D}(\mathscr{L}) = \bigg\{\Conditional{f\in
                               C_0(\mathcal{X})}{&\exists g\in
                               C_0(\mathcal{X})\quad\text{\small such
                               that}\\
    &\quad \left\|\frac{P_\delta - f}{\delta}
                               - g\right\|\overset{\delta\downarrow
                               0}{\longrightarrow} 0}\bigg\}
  \end{align*}

  The \emph{infinitesimal generator} of a Feller-Dynkin process is the
  operator $\xxL:\mathscr{D}(\xxL)\to C_0(\mathcal{X})$ where
  \begin{align*}
    \xxL f = \lim_{\delta\to 0}\frac{P_\delta f - f}{\delta}
  \end{align*}

  and $\mathscr{D}(\xxL)$ is called the \emph{domain of the
    infinitesimal generator} $\mathscr{\xxL}$.
\end{definition}

To deal with non-deterministic times in the analysis of a
continuous-time Markov process, we recall the formalism of a
\emph{stopping time}.

\begin{definition}[Stopping time, \citep{le2016brownian}]\label{def:stopping-time}
  Let $(\Omega, \mathcal{F}, (\mathcal{F}_t))$ be a measurable space
  with \hyperref[def:filtration]{filtration} $(\mathcal{F}_t)$. A random variable
  $T:\Omega\to\mathbf{R}_+$ is called a \emph{stopping time} with
  respect to the filtration $(\mathcal{F}_t)$ if
  \begin{equation*}
    \{T\leq t\} \in\mathcal{F}_t\qquad t\geq 0
  \end{equation*}

  We define the \emph{$\sigma$-algebra of the past before $T$} as the
  $\sigma$-algebra $\mathcal{F}_T$ given by

  \begin{equation*}
    \mathcal{F}_T = \left\{A\in\mathcal{F}_\infty : A\cap\{T\leq t\}\in\mathcal{F}_t\right\}
  \end{equation*}
\end{definition}

\subsection{Brownian Motion}\label{app:ctmp:brownian}
Brownian motion is ubiquitous in the study of stochastic
processes. The idea can be motivated as follows.

Let $X_0\triangleq 0\in\mathbf{R}$. Suppose we are modeling the
trajectory of the random process $\indexedabove{t}{X}$, where $X$ is
``continuously perturbed'' by Gaussian noise with mean $0$. What does
it mean for something to be \emph{continuously perturbed} by noise? A
natural way to reason about this is to discretize time, and suppose
that the variable at consecutive timesteps differs by a random
quantity sampled independently from a Gaussian with zero
mean. We want $X_1$ to have variance $1$, and we want this variance to
spread evenly through time in the sense that $X_t$ has variance
$t$. We can begin with a very coarse discretization where the timestep
$\tau$ has duration $1$, which involves sampling
$X_1\sim\gaussian{0}{1}$ and interpolate linearly form $t=0$ to
$t=1$. Then we can study the behavior as $\tau\to 0$. For any
$\tau>0$, we simply sample $X_{t+\tau}\sim X_t +
\gaussian{0}{\tau}$. Alternatively, we can sample 
$(X_{k\tau})_{k\in\mathbf{N}}$ via a
Gaussian process with covariance kernel $K(X_s, X_t) = \min(s, t)$
\citep{williams2006gaussian}. Figure \ref{fig:brownian:viz:1}
illustrates some of these samples for various values of $\tau$.

\begin{figure}[h]
  \centering
  \includegraphics[scale=0.8]{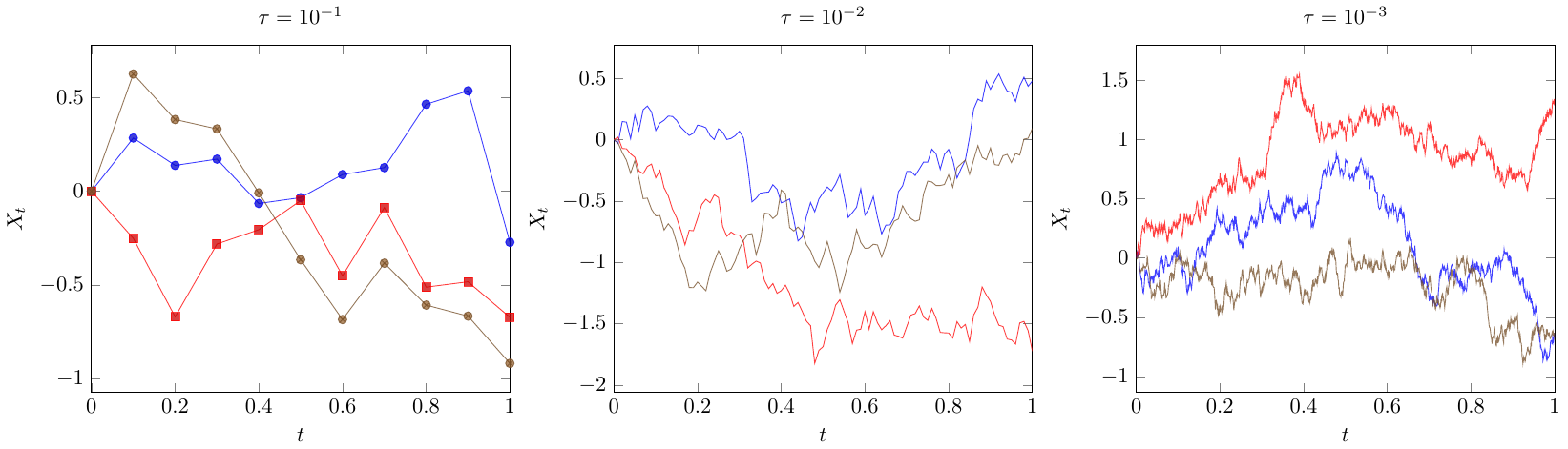}
  \caption{Discretized Brownian motion trajectories for various
    timesteps $\tau$}
  \label{fig:brownian:viz:1}
\end{figure}

Considering once again the filtered probability space $(\Omega,
\mathcal{F}, \indexedabove{t}{\mathcal{F}}, \mu)$, the criteria for a
Brownian motion $\indexedabove{t}{B}$ can be stated formally as

\begin{enumerate}
\item $B_0=0$, $\mu$-almost surely;
\item For any $0\leq r<s<t$, the random variable $B_t - B_s$ is
  independent from $\mathcal{F}_r$ and is distributed according to
  $\gaussian{0}{t-s}$;
\item The \emph{sample paths} of $\indexedabove{t}{B}$, defined as the
  mappings $t\mapsto B_t(\omega)$ for any fixed
  $\omega\in\mathcal{F}_t$, are continuous.
\end{enumerate}

Proving that such a process exists is not trivial by any
means. Fortunately, Brownian motion \emph{does} exist, and
\citet{le2016brownian} can be consulted for its construction.
\section{The Feynman-Kac Formula}\label{app:feynman-kac}
We make use of the following formulation of the \emph{Feynman-Kac
  formula}, as illustrated in \citet[Exercise 6.26]{le2016brownian}.

\begin{theorem}\label{thm:feynman-kac}
  Let $\indexedabove{t}{X}$ be a
  \hyperref[def:fd]{Feller-Dynkin} process in a space $\mathcal{X}$
  and let $v\in C_0(\mathcal{X})$. Define for any $x\in\mathcal{X}$
  and $\phi$ a bounded and measurable function over $\mathcal{X}$ the
  transition semigroup $\indexedabove{t}{Q^\star}$ where

  \begin{align*}
    Q_t^\star\phi(x) &= \ConditionExpect{\phi(X_t)\exp\left(-\int_0^tv(X_s)ds\right)}{X_0=x}
  \end{align*}

  If $\indexedabove{t}{X}$ admits an infinitesimal generator
  $\mathscr{L}$ and $\phi\in\mathcal{D}({\mathscr{L}})$, then

  \begin{equation}
    \label{eq:feynman-kac:generator}
    \frac{d}{dt}Q_t^\star\phi\rvert_{t=0} = \mathscr{L}\phi - v\otimes\phi
  \end{equation}
\end{theorem}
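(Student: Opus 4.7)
}

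The plan is to compute $\tfrac{d}{dt}Q_t^\star\phi(x)\rvert_{t=0}$ directly from the definition by adding and subtracting the unkilled semigroup $P_t\phi(x) = \mathbf{E}[\phi(X_t)\mid X_0=x]$, which separates the Feynman-Kac increment into a piece that produces $\mathscr{L}\phi$ and a piece that produces the pointwise potential term $-v\phi$. Concretely, write
\begin{equation*}
  \frac{Q_t^\star\phi(x) - \phi(x)}{t}
  = \frac{P_t\phi(x) - \phi(x)}{t}
  + \mathbf{E}\!\left[\phi(X_t)\,\frac{e^{-\int_0^t v(X_s)\,ds} - 1}{t}\,\bigg|\,X_0=x\right].
\end{equation*}
The first term converges in sup-norm to $\mathscr{L}\phi(x)$ as $t\downarrow 0$ by the hypothesis $\phi\in\mathcal{D}(\mathscr{L})$ together with the definition of the infinitesimal generator in Appendix \ref{app:ctmp}. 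The task reduces to identifying the limit of the second term with $-v(x)\phi(x)$.

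To handle the exponential cleanly, I would apply the identity
\begin{equation*}
  e^{-\int_0^t v(X_s)\,ds} - 1 = -\int_0^t v(X_s)\,e^{-\int_0^s v(X_r)\,dr}\,ds,
\end{equation*}
which follows from integrating the ODE satisfied by $s\mapsto e^{-\int_0^s v(X_r)\,dr}$ along each sample path. Dividing by $t$ and inserting into the expectation yields
\begin{equation*}
  \mathbf{E}\!\left[\phi(X_t)\,\frac{1}{t}\!\int_0^t v(X_s)e^{-\int_0^s v(X_r)\,dr}\,ds \,\bigg|\, X_0=x\right].
\end{equation*}
As $t\downarrow 0$, the inner time-average converges pointwise along $\omega$ to $v(X_0)\cdot 1 = v(x)$ by continuity of $s\mapsto v(X_s(\omega))$ (guaranteed by continuity of $v\in C_0(\mathcal{X})$ and right-continuity of Feller-Dynkin sample paths), while $\phi(X_t)\to\phi(x)$ along each continuous sample path. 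Passing the limit under the expectation then gives the desired $-v(x)\phi(x)$, and summing the two limits produces the claimed identity $\mathscr{L}\phi - v\otimes\phi$, where $v\otimes\phi$ is read as the pointwise product $(v\otimes\phi)(x)=v(x)\phi(x)$.

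The main technical obstacle is justifying the interchange of limit and conditional expectation in the second term. I would handle it via dominated convergence: since $v\in C_0(\mathcal{X})$ is bounded, the kernel $\frac{1}{t}\int_0^t v(X_s)e^{-\int_0^s v(X_r)\,dr}\,ds$ is uniformly bounded by $\|v\|_\infty e^{t\|v\|_\infty}\le \|v\|_\infty e^{\|v\|_\infty}$ for $t\le 1$, and $\phi$ is bounded by hypothesis, so the integrand is dominated by an integrable constant. A mild subtlety is that convergence $\phi(X_t)\to\phi(x)$ is only guaranteed pointwise along sample paths because $\phi$ need only be bounded and measurable; fortunately, for $\phi\in\mathcal{D}(\mathscr{L})\subset C_0(\mathcal{X})$ continuity is available, which combined with the Feller-Dynkin continuity of trajectories at $t=0$ supplies pointwise convergence under the expectation. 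The dominated convergence theorem then closes the argument.
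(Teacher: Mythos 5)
Your proof is correct, but there is nothing in the paper to compare it against: the paper imports Theorem \ref{thm:feynman-kac} from \citet[Exercise 6.26]{le2016brownian} and supplies no proof of its own. Your argument is the standard one, and every step holds up: the decomposition of the increment into the unkilled part $\frac{P_t\phi - \phi}{t}$ (which converges in sup norm to $\mathscr{L}\phi$ precisely because $\phi\in\mathcal{D}(\mathscr{L})$) plus the killing increment; the pathwise identity for the exponential, which is legitimate because $s\mapsto\int_0^s v(X_r)\,dr$ is Lipschitz (as $v$ is bounded), hence absolutely continuous, so the fundamental theorem of calculus applies almost everywhere along each path; and the dominated convergence step with the correct dominating constant $\|\phi\|_\infty\|v\|_\infty e^{\|v\|_\infty}$ for $t\leq 1$. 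Two small points of hygiene. First, you write at one point ``along each continuous sample path''; Feller--Dynkin processes are only guaranteed c\`adl\`ag modifications, but right-continuity at $t=0$ --- which you correctly invoke elsewhere --- is all you actually use, both for $\phi(X_t)\to\phi(x)$ and for the time-average $\frac{1}{t}\int_0^t v(X_s)e^{-\int_0^s v(X_r)\,dr}\,ds\to v(x)$. Second, your argument establishes the derivative in \eqref{eq:feynman-kac:generator} \emph{pointwise} in $x$: the generator term converges uniformly, but your dominated-convergence treatment of the potential term gives only pointwise convergence. If the derivative is meant in the strong ($C_0$ sup-norm) sense, you would need a uniformity upgrade --- for instance via the Duhamel identity $Q_t^\star\phi = P_t\phi - \int_0^t P_s\left(v\,Q_{t-s}^\star\phi\right)ds$ together with strong continuity of $(P_t)_{t\geq 0}$, or a modulus-of-continuity bound exploiting $v\phi\in C_0(\mathcal{X})$. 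For how the theorem is actually used in this paper, the pointwise statement suffices, so this is a caveat rather than a gap.
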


\begin{remark}
  The Feynman-Kac formula can be seen as the
  \hyperref[thm:kbe]{Kolmogorov Backward Equation} with an
  ``integrating factor''. Effectively, the Feynman-Kac formula
  allows us to identify solutions of PDEs of the form

  \begin{align*}
    \partialderiv{u}{t} &= - \mathscr{L}u + v\otimes\phi
  \end{align*}

  with conditional expectations of diffusion processes.
\end{remark}

\section{Wasserstein Gradient Flows}\label{app:wgf}
Recall that in continuous time,
the value function is characterized by a PDE. We should therefore
anticipate that the return distribution function will also be
characterized by some differential equation. In discrete-time RL
algorithms the value function is updated to minimize its difference to
the fixed point of the Bellman operator. In the continuous-time limit,
this is represented by a \emph{gradient flow} \citep{Santambrogio2016EuclideanMA},
\begin{equation}
  \label{eq:cauchy}
  \partialderiv{}{t}\returnmeasure_t =
  -\nabla\mathscr{G}\returnmeasure_t
\end{equation}
where $\mathscr{G}$ is a ``loss functional'' that effectively computes
the distance between $\returnmeasure_t$ and its fixed point. However,
\eqref{eq:cauchy} has some glaring problems: the space of probability
measures is not a vector space, so neither of the terms in
\eqref{eq:cauchy} are meaningful. To cope with this, we will consider
an alternate form of \eqref{eq:cauchy} that can be expressed entirely
in terms of metric space properties, called the \emph{Evolution
  Variational Inequality} \citep{de1980problems},
\begin{equation}
  \label{eq:evi}\tag{$\text{EVI}_\lambda$}
  \frac{1}{2}\partialderiv{}{t}d^2(\mu_t, \nu) \leq \mathscr{G}(\nu) -
  \mathscr{G}(\mu_t) + \frac{\lambda}{2}d^2(\mu_t, \nu)
\end{equation}
where $\lambda > 0$ and $\mu_t, \nu$ are elements of an abstract
metric space with metric $d$. When the metric space is Euclidean,
\eqref{eq:evi} and \eqref{eq:cauchy} are equivalent
\citep{Santambrogio2016EuclideanMA}. This characterization of a
gradient flow is much more attractive considering the following
result.

\begin{theorem}[\citet{muratori2018gradient}, Theorem 3.5]
  Let $(\mathcal{V}, d)$ be a metric space and suppose
  $\mathscr{G}:\mathcal{V}\to\R_+$ is $\lambda$-convex. If two
  curves $\mu, \nu:\R_+\to\mathcal{V}$ satisfy \eqref{eq:evi}, then
  \begin{align*}
    d(\mu_t, \nu_t) &\leq e^{-\lambda t}d(\mu_0, \nu_0)
  \end{align*}
  Consequently, for any given initial data $\mu_0 = \varrho$,
  solutions to \eqref{eq:evi} must be unique.
\end{theorem}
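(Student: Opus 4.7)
The plan is to derive a Grönwall-type differential inequality for the squared distance $\phi(t) = d^2(\mu_t, \nu_t)$ by pairing two applications of \eqref{eq:evi}, and then to read off both exponential contraction and uniqueness.

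First, I would apply \eqref{eq:evi} along the curve $\mu$ with the test element $\nu = \nu_t$ held fixed, and symmetrically along $\nu$ with the test element $\mu_t$ held fixed. Each invocation controls one of the partial derivatives $\partialderiv{}{s}d^2(\mu_s, \nu_t)|_{s=t}$ and $\partialderiv{}{s}d^2(\mu_t, \nu_s)|_{s=t}$ by a combination of the potential difference $\mathscr{G}(\nu_t) - \mathscr{G}(\mu_t)$ (respectively its negation) and a multiple of $d^2(\mu_t, \nu_t)$.

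Second, summing the two inequalities causes the $\mathscr{G}$ terms to cancel exactly by antisymmetry. By the symmetry of $d$ and a Leibniz-type chain rule for the squared distance evaluated along two locally absolutely continuous curves, the sum of the two one-variable partial derivatives coincides with the total derivative $\partialderiv{}{t}\phi(t)$. What remains is a closed scalar differential inequality of the form $\phi'(t) \leq -2\lambda\,\phi(t)$ (with the sign of the $\lambda$-term consistent with the $\lambda$-convexity convention of \cite{muratori2018gradient}).

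Third, Grönwall's lemma gives $\phi(t) \leq e^{-2\lambda t}\phi(0)$; extracting the square root yields the stated contraction $d(\mu_t, \nu_t) \leq e^{-\lambda t}d(\mu_0, \nu_0)$. Uniqueness is then immediate: any two EVI solutions with common initial datum $\varrho$ satisfy $\phi(0) = 0$, forcing $\phi(t) \equiv 0$ and hence $\mu_t = \nu_t$ for every $t \in \R_+$.

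The main obstacle is the identification of the sum of the two one-variable partial derivatives with the joint time derivative of $d^2(\mu_t, \nu_t)$: in an abstract metric space, the squared distance need not be jointly differentiable, so this step relies on the absolute continuity of both gradient-flow curves together with the standard chain rule for the distance functional along absolutely continuous curves (as developed in the Ambrosio--Gigli--Savar\'e framework via metric derivatives). Once that regularity bridge is secured and the sign convention is fixed, the remainder of the argument is a routine Grönwall calculation.
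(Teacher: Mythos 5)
The paper does not actually prove this statement: it is quoted, with attribution, as Theorem 3.5 of \citet{muratori2018gradient}, so there is no internal proof to compare against. Judged on its own, your argument is the standard EVI contraction proof --- apply \eqref{eq:evi} twice with the roles of the two curves swapped, sum so that the $\mathscr{G}$-terms cancel by antisymmetry, obtain the scalar inequality $\phi'(t)\leq -2\lambda\,\phi(t)$ for $\phi(t)=d^2(\mu_t,\nu_t)$, and finish with Gr\"onwall and a square root --- and this is essentially the argument in the cited reference and in the Ambrosio--Gigli--Savar\'e framework, so your outline is correct.

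Two points deserve emphasis. First, the sign: as displayed in the paper, \eqref{eq:evi} carries $+\frac{\lambda}{2}d^2(\mu_t,\nu)$ on the right-hand side, and running your doubling argument with that sign yields $\phi'(t)\leq +2\lambda\,\phi(t)$, i.e.\ growth $e^{+\lambda t}$ rather than the claimed contraction; the contraction requires the standard convention with $-\frac{\lambda}{2}d^2(\mu_t,\nu)$, which is what $\lambda$-convexity actually delivers. You were right to hedge on the convention --- the inconsistency is in the paper's display of \eqref{eq:evi}, not in your argument (uniqueness survives either sign, since $\phi(0)=0$ forces $\phi\equiv 0$ under Gr\"onwall in both cases). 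Second, the step you flag as the main obstacle really is the crux: in a bare metric space the identity equating the sum of the two one-variable partial derivatives with the total derivative of $t\mapsto d^2(\mu_t,\nu_t)$ is not free, and moreover the local absolute continuity of the curves is not a hypothesis of the theorem --- it is itself a regularity property of EVI solutions that must first be established (as is done in \citet{muratori2018gradient}). The fully rigorous route in the literature proceeds either through a Kruzhkov-style doubling of the time variables applied to an integrated form of \eqref{eq:evi}, or through the metric chain-rule machinery you cite; your appeal to that framework is the right repair, provided you treat the absolute continuity as a conclusion to be proven rather than an assumption. With those two adjustments your proof matches the standard one.
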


The machinery of abstract gradient flows has been particularly
fruitful in the analysis of curves in $2$-Wasserstein space. The
celebrated work of \citet{Jordan02thevariational} establishes an
equivalence between such a Wasserstein gradient flow (WGF) and the
\emph{Fokker-Planck equation},
\begin{equation}
  \label{eq:fokker-planck}\tag{FP}
  \partialderiv{}{t}\varrho_t(x) = -\nabla\cdot(\varrho_t(x)f(x)) +
  \beta\Delta\varrho_t(x)
\end{equation}
whose solution is the density of the solution to the stochastic
differential equation given by
\begin{align*}
  dX_t &= f_t(X_t)dt + \sqrt{\beta}dB_t
\end{align*}
where $\beta\in\R_+$ and $\indexedabove{t}{B}$ is a Brownian motion
\citep{ambrosio2008gradient}. Ultimately,
\citet{Jordan02thevariational} introduces a time-discretized scheme
known as the \emph{JKO scheme} for solving PDEs and optimization
problems in $2$-Wasserstein space. Remarkably, the JKO scheme takes
the form of a regularized gradient descent algorithm of a tractable
loss, and whose gradients can be estimated from samples without
bias. The algorithm is the following generalized minimizing movements
\citep{de1993new} scheme:
\begin{equation}
  \label{eq:minimizing-movements:appendix}\tag{JKO}
  \varrho_{k+1} \in \arg\min_{\varrho}\left\{\kl{\varrho}{\mu} +
    \frac{1}{2\tau}\xxW_2^2(\varrho, \varrho_k)\right\}
\end{equation}
where $\mu(x)\propto\exp(-F_t(x))$, $f_t = \nabla F_t$, $\tau>0$ is the discretized
timestep, and $\varrho_{k}$ is short for $\varrho_{k\tau}$.

The JKO scheme has made several appearances in the ML
literature. Entropically-regularized optimal transport methods, for
instance, \citep{cuturi2013sinkhorn} are founded on the JKO
scheme. \citet{chizat2018global} uses the JKO scheme to guarantee
convergence to a global optimum when training neural networks without
convexity assumptions. In the RL literature, \citet{Zhang2018PolicyOA}
employs a JKO scheme to learn a posterior distribution over optimal
policies. More akin to the developments in this paper,
\citet{martin2020stochastically} presents a novel DRL algorithm where
the return distributions are trained as a WGF.

\section{Tempered Distributions}\label{app:distributions}
A recurring concept in many areas of mathematics, physics, and engineering is
that of \emph{generalized functions}, known as \emph{distributions}\footnote{Not
  to be confused with probability distributions.}. One such example is the Dirac
  delta. Distributions are particularly helpful at formally describing weakened solutions
  to PDEs by objects that may not be functions.

In this text, we will make use of the class of \emph{tempered} distributions,
which will be defined shortly. For more details, refer to
\citet{lax2002functional}.

\begin{definition}[Schwartz Class]
  Let $X$ be a normed space. A \emph{Schwartz class} is a class $\mathcal{S}$ of rapidly decaying-smooth
  functions,

  \begin{align*}
    \mathcal{S} = \left\{f\in C^\infty(X; \mathbf{R}) : \sup_{x\in X}(1 +
    \|x\|^k)|f^{(m)}(x)|<\infty\quad\forall k,m\in\mathbf{N}\right\}
  \end{align*}
\end{definition}

\begin{definition}[Tempered Distribution]\label{def:tempered-distribution}
  A tempered distribution is an element of the topological dual\footnote{The
    dual of a normed space is the set of all continuous, linear functionals on
    that space.} $\mathcal{S}'$
  of the Schwartz class $\mathcal{S}$.
\end{definition}

\begin{remark}
  The Dirac delta is the operator $\delta$ such that $\langle\delta, \phi\rangle
  = \phi(0)$. Clearly $\delta$ is linear, and since it is bounded,
  it is continuous. Therefore $\delta$ is indeed a tempered distribution.
\end{remark}

Tempered distributions admit a notion of differentiability, which can be used to
define ``distributional" solutions to PDEs.

\begin{definition}[Distributional
  Derivative]\label{def:distributional-derivative}
  Let $\mathcal{S}$ be a Schwartz class and $\psi\in\mathcal{S}'$ a tempered
  distribution. Then $\psi$ has a distributional derivative if there exists a
  tempered distribution $\psi'$ for which

  \begin{align*}
    \langle \psi', \phi\rangle &= -\langle\psi,
    \phi'\rangle\qquad\forall\phi\in\mathcal{S},
  \end{align*}

  and $\psi'$ is called the distributional derivative of $\psi$.
\end{definition}

\begin{definition}[Distributional Solutions of Hamilton-Jacobi
  PDEs]\label{def:distributional-solution}
  Consider the following PDE,
  \begin{equation}\label{eq:distributional-solution:pde}
    \partialderiv{u}{t} = f\circ u + \langle\nabla u, g\rangle +
    \quadraticform{h}{\hessian{y}u}
  \end{equation}

  where $u\in C^2(\mathbf{R}_+\times\mathcal{Y};\mathbf{R})$ for a normed space
  $\mathcal{Y}$.

  Then $\psi\in\mathcal{S}'$ is said to be a \emph{distributional solution} to
  \eqref{eq:distributional-solution:pde} if

  \begin{align*}
    &\int_0^\infty\int_{\mathcal{Y}}\phi(t, y)\left(f(\psi(y)) -
      \partialderiv{}{t}\psi(y)\right)dydt\\
    &\qquad=\int_{0}^\infty\int_{\mathcal{Y}}\bigg[\langle \psi(y)g(y), \nabla_y\phi(t,
    y)\rangle - \quadraticform{h(y)}{\psi(y)\hessian{y}\phi(t,
    y)}\bigg]dydt
  \end{align*}

  for every test function $\phi\in\mathcal{S}$. This is justified by simply multiplying both sides of
  \eqref{eq:distributional-solution:pde} by the test function, integrating over
  $\mathbf{R}_+\times\mathcal{Y}$, and substituting gradient terms of $\psi$
  with respect to its distributional derivative.
\end{definition}

\end{document}